\def\WITHAPPENDIX{1}
\documentclass[letterpaper]{article} 
\usepackage[preprint]{aaai2027}  
\usepackage[hyphens]{url}  
\usepackage{graphicx} 
\usepackage{natbib}  
\usepackage{caption} 
\usepackage{amsmath,amssymb,amsthm,bm}
\usepackage{booktabs,multirow,tabularx,array}
\usepackage{algorithm}
\usepackage{algorithmic}

\newtheorem{proposition}{Proposition}
\newtheorem{lemma}{Lemma}
\newtheorem{remark}{Remark}
\theoremstyle{definition}

\newtheorem{assumption}{Assumption}

\newcommand{\bcppo}{\textnormal{\textsc{BCPPO}}}
\newcommand{\E}{\mathbb{E}}
\renewcommand{\Pr}{\mathbb{P}}
\DeclareMathOperator{\VaR}{VaR}
\DeclareMathOperator{\CVaR}{CVaR}
\newcommand{\Clim}{C_{\mathrm{lim}}}
\newcommand{\RB}{\mathcal{R}_B}
\newcommand{\ABP}{A_{\mathrm{BP}}}
\newcommand{\norm}[1]{\mathrm{norm}(#1)}
\newcommand{\stopgrad}[1]{\overline{#1}}

\ifdefined\WITHAPPENDIX
  \newcommand{\suppref}[2]{#1~\ref{#2}}
  \newcommand{\Suppref}[2]{#1~\ref{#2}}
  
\else
  \newcommand{\suppref}[2]{supplementary material}
  \newcommand{\Suppref}[2]{Supplementary material}
  
\fi

\title{%
  \bcppo: Bachelier-Inspired Constrained Proximal Policy Optimization\\
  for Tail-Risk-Aware Safe Reinforcement Learning
}
\author{
  Dongsheng Hou\equalcontrib \quad
  Yanqiao Chen\equalcontrib \quad
  Yuhan Rui\equalcontrib
}
\affiliations{
  Southern University of Science and Technology\\
  \texttt{\{12410421,12412115\}@mail.sustech.edu.cn},
  \texttt{ruiyuhan0110@gmail.com}
}
\date{}

\begin{document}
\maketitle

\begin{abstract}
Expected-cost constraints can still permit rare, high-cost events. Monte Carlo
conditional value at risk (CVaR) gradients can be noisy at high confidence,
whereas critics that model an outcome distribution add complexity. We propose
\bcppo{} (Bachelier-Inspired Constrained Proximal Policy Optimization), a
proximal policy optimization (PPO) method. Separately initialized cost-prediction
networks (critics), trained with random sample masks, produce disagreement that
marks predictions sensitive to which state--action regions occur in the training
data and to critic training. A Bachelier
formula for the expected amount above a reference level converts this
disagreement into a smooth policy-update penalty. Gradients from this penalty do
not alter the critics, so temporal-difference (TD) critic learning is unchanged.
A saturation-aware controller adjusts the mean-cost penalty and stops
accumulated error from growing while that penalty is clipped. Deployment retains
only the policy network. The disagreement penalty is neither a tail-event
probability nor a guaranteed error bound, and it provides no safety guarantee.
Across 175 runs with
shared tasks, costs, budgets, training steps, and evaluation seeds, no comparator
attains both higher mean return and lower mean CVaR than \bcppo{} in any task.
On Push1, \bcppo{} has no lower return and no higher CVaR than every comparator,
with at least one strict gain. These results support a practical balance among
reward, caution around cost predictions that vary across trained critics, and
policy-only deployment.
\end{abstract}

\section{Introduction}
\label{sec:intro}

Deploying reinforcement learning (RL) in safety-critical settings---autonomous
driving, robot manipulation, medical treatment planning---demands more than
reward maximization: it demands that harmful outcomes be rare.
The constrained Markov decision process
(CMDP)~\citep{altman1999} provides the standard framework,
constraining the \emph{expected} discounted cumulative cost:
\begin{align}
  \max_{\pi}\; J_r(\pi) &= \E_{\tau\sim\pi}\!\left[\sum_{t=0}^{\infty}
    \gamma^t r(s_t,a_t)\right] \label{eq:reward_obj}\\
  \text{s.t.}\quad J_c(\pi) &= \E_{\tau\sim\pi}\!\left[\sum_{t=0}^{\infty}
    \gamma^t c(s_t,a_t)\right] \leq \Clim. \label{eq:cost_constraint}
\end{align}
Here $\pi$ is the policy, $\tau=(s_0,a_0,s_1,\ldots)$ is a trajectory sampled
from it, and $s_t,a_t,r,c,\gamma$, and $\Clim$ denote state, action, reward,
nonnegative safety cost, discount, and expected-cost budget. Writing
$C_\gamma(\tau)=\sum_t\gamma^t c(s_t,a_t)$ and letting $C_{\mathrm{harm}}$ be a
harmful-cost threshold, Eq.~(\ref{eq:cost_constraint}) does not control the tail:
\[
  J_c(\pi)\leq\Clim \;\nRightarrow\;
  \Pr\!\left(C_\gamma(\tau)>C_{\mathrm{harm}}\right) \approx 0.
\]
A policy with zero cost in $95\%$ of trajectories and cost $L$ otherwise still
satisfies the budget whenever $0.05L\leq\Clim$.

Rare events also leave some state--action costs weakly supported by data.
Existing methods emphasize sampled high-cost trajectories, distributional
quantiles, epistemic uncertainty from limited knowledge or model disagreement,
or aleatoric uncertainty from irreducible outcome randomness. We instead retain
mean-cost control while marking predictions sensitive to critic training, without
interpreting that sensitivity as a tail probability.

To address this need, we propose \bcppo{}. Its contributions are:
\begin{itemize}
	  \item \textbf{Mean--disagreement decomposition.}
	    Cost critics trained with independent random sample masks separate the mean
	    prediction used by the standard cost branch from cross-critic disagreement.
	  \item \textbf{Bachelier-inspired policy-update shaping.}
	    A state--action-dependent reference level makes the expected-excess penalty
	    equal to a positive coefficient computed directly from the fixed risk
	    parameters times ensemble spread, without
	    counting mean cost twice. Blocking gradients from this penalty to the
	    critics preserves critic learning; mean-cost
	    control stops accumulated error from growing while its multiplier is clipped;
	    and deployment uses only the policy.
	  \item \textbf{Shared-protocol performance and mechanism evidence.}
    Across 175 runs, no comparator improves both mean reward and CVaR over
    \bcppo{} in any task, and \bcppo{} improves at least one without worsening
    the other against every comparator on Push1. Ablations test component effects;
    controlled critic retraining after deliberately removing a region of training data
    tests whether missing coverage raises disagreement beyond effects from
    sample count and artificial noise added to TD targets.
\end{itemize}

\begin{figure*}[t]
\centering
\includegraphics[width=0.48\textwidth]{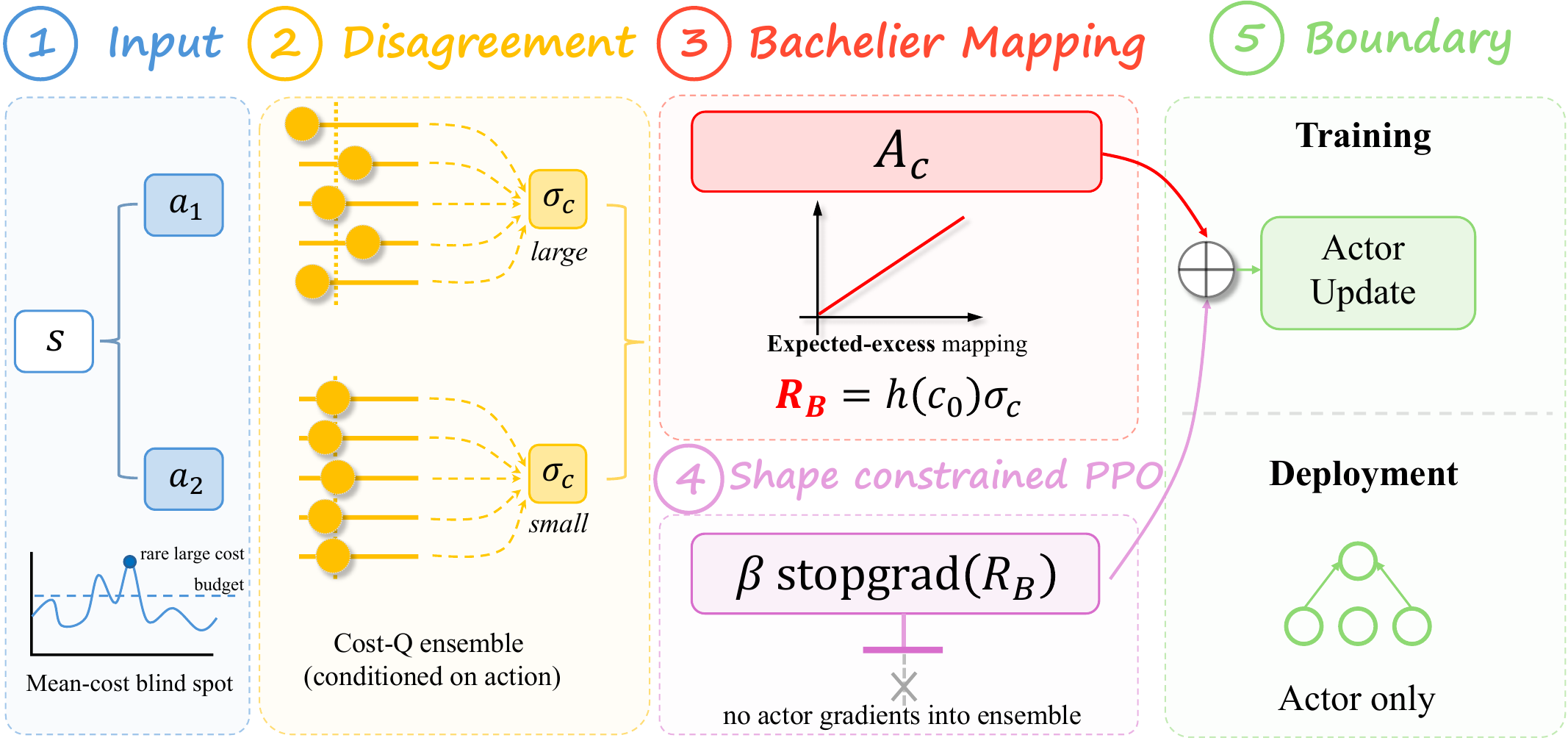}%
\hfill
\includegraphics[width=0.48\textwidth]{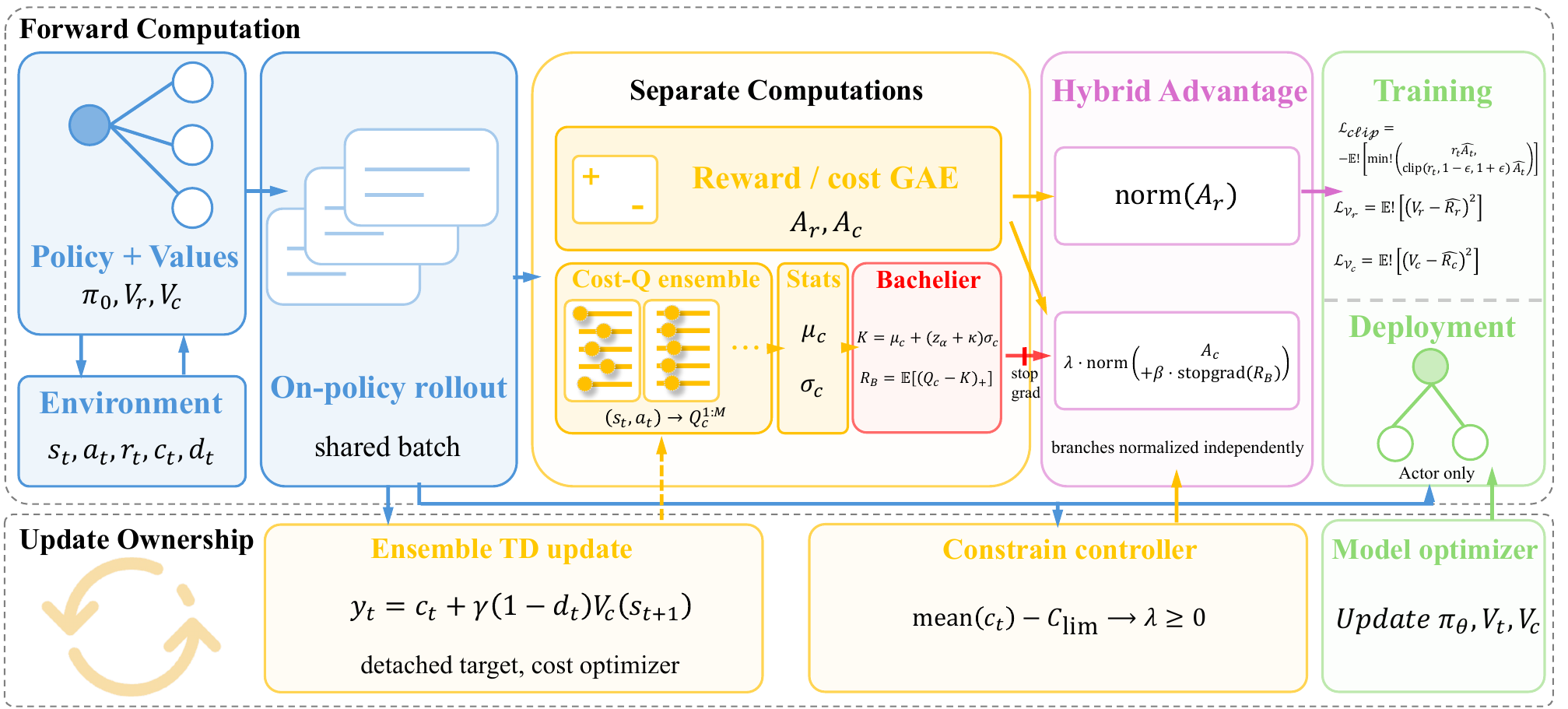}
\makebox[0.48\textwidth][c]{\small (a) Motivation and disagreement-based cost shaping.}%
\hfill
\makebox[0.48\textwidth][c]{\small (b) BCPPO training and deployment data flow.}
\caption{
  BCPPO overview. (a) Mean-cost control can miss rare high-cost outcomes.
  Action-conditioned cost-critic disagreement is mapped to the
  Bachelier-inspired penalty $\RB$ and, with gradients blocked from the
  ensemble, added to the cost branch. The standard reward branch, multiplier,
  and branch normalization are omitted from panel (a) for clarity.
  (b) Shared on-policy rollouts feed reward/cost advantage estimation and a
  separately optimized cost-Q ensemble. Normalized reward and shaped-cost
  branches form $A_{\mathrm{BP}}$ for the PPO actor update; the ensemble receives
  only temporal-difference gradients, and deployment retains only the actor.
}
\label{fig:overview}
\end{figure*}

\section{Related Work}
\label{sec:related}

\paragraph{Expectation-constrained safe reinforcement learning baselines.}
The CMDP framework~\citep{altman1999} underpins methods that constrain expected
cost. Our maintained OmniSafe baselines are Constrained Policy Optimization
(CPO)~\citep{achiam2017constrainedpolicyoptimization}, PPO-Lagrangian
(PPOLag)~\citep{ray2019safety}, and First-Order Constrained Optimization in
Policy Space (FOCOPS)~\citep{zhang2020orderconstrainedoptimizationpolicy}.
They use different updates to enforce the same mean-cost objective and are not
tail-risk algorithms. \bcppo{} adopts
proportional--integral--derivative (PID) Lagrangian
control~\citep{stooke2020responsivesafetyreinforcementlearning} with anti-windup,
which stops integral accumulation when the projected multiplier saturates;
related gradient-based variants also constrain expected cost
\citep{yang2020projectionbasedconstrainedpolicyoptimization,
liu2022constrainedvariationalpolicyoptimization,
sootla2022sauterlsurelysafe}. Such constraints can improve tails indirectly but
do not specify upper-tail behavior.

\paragraph{Sample-based and distributional tail risk.}
Utility-based risk-sensitive RL predates direct CVaR policy
gradients~\citep{mihatsch2002}. Such gradients identify high-cost trajectories
through empirical tail samples and
can become noisier in proportion to
$(1-\alpha)^{-1}$~\citep{tamar2014optimizingcvarsampling}. CVaR-constrained
dynamic programming and PPO provide direct alternatives
\citep{chow2017riskconstrainedreinforcementlearningpercentile,
ying2022towardssafereinforcementlearningcvar}. Our CVaR-weighted PPO comparison
(CPPO) is a controlled Monte Carlo upper-tail comparison, not an external
reference result. Distributional RL
\citep{bellemare2017distributionalperspectivereinforcementlearning,dabney2017distributionalreinforcementlearningquantile,dabney2018implicitquantilenetworksdistributional}
models return variation explicitly, and distributional safety critics place
implicit quantile network cost-return models inside CVaR constraints
\citep{yang2022distributional}. Our quantile-regression PPO (QR PPO) comparison
tests a quantile cost critic in place of uncertainty shaping. Worst-Case Soft
Actor Critic (WCSAC), a Soft Actor-Critic (SAC) method that reuses stored
transitions,
\citep{Yang_Simao_Tindemans_Spaan_2021} supplies the external tail-aware
comparison through Gaussian cost-return modeling and a CVaR
objective. Together these methods separate sampled-tail, distributional-tail,
and mean-cost mechanisms.

\paragraph{Epistemic uncertainty in safe RL.}
Bayesian uncertainty decomposition~\citep{depeweg2018decompositionuncertaintybayesiandeep},
deep ensembles~\citep{lakshminarayanan2017simplescalablepredictiveuncertainty},
and probabilistic ensembles with trajectory sampling
\citep{chua2018deepreinforcementlearninghandful} provide practical
epistemic-uncertainty estimators. Ensemble Q-functions have also supported
offline out-of-distribution (OOD) conservatism and exploration guided by
optimistic uncertainty bounds
\citep{an2021uncertaintybasedofflinereinforcementlearning,
lee2021sunrisesimpleunifiedframework}. Safe-RL methods use ensembles in
different roles:
Bayesian CPO propagates uncertainty through a learned transition model
\citep{as2022constrainedpolicyoptimizationbayesian};
\citet{stachowicz2024racerepistemicrisksensitiverl} apply CVaR to an ensemble
of distributional return critics, jointly exposing the actor to aleatoric
return variation and epistemic disagreement; and Uncertainty-Aware Safety
Propagation Critics (USPC)~\citep{demiray2026uspc}
forms a conservative cost from a scalar ensemble's mean and spread, then
propagates safety labels from actions judged safe to nearby actions.

USPC and \bcppo{} therefore place uncertainty differently. \bcppo{} keeps
unmodified TD targets and the original cost advantage; its reference level
cancels the ensemble mean from $\RB$, leaving critic disagreement as an
actor-side penalty while PID controls mean cost. It neither replaces the cost
estimate with an upper bound nor transfers safety labels across actions.
The disagreement signal is used only during training; its numerical value is
not an OOD probability or a strict tail-risk guarantee.

\section{\bcppo{}: Method}
\label{sec:method}

\bcppo{} separates mean predicted cost from disagreement across trained critics
(Figure~\ref{fig:overview}). Disagreement enters the PPO-Lagrangian actor as a
cautionary penalty; agreement does not certify correctness or safety. A
Bachelier expected-excess formula supplies the penalty coefficient, actor
gradients are blocked from the critics, and anti-windup PID updates the
mean-cost multiplier from observed cost error.

\subsection{CMDP, CVaR, and the Bachelier Formula}

We use the CMDP variables defined in
Eqs.~(\ref{eq:reward_obj})--(\ref{eq:cost_constraint}). In experiments, the PID
error and evaluation cost are length-normalized per-step rates; experimental
$\Clim$ is therefore a rate limit rather than the unnormalized discounted sum.
For a random variable $X$ and confidence level
$\alpha\in(0,1)$, its value at risk (VaR) is
$\VaR_\alpha(X)=\inf\{z:\Pr(X\leq z)\geq\alpha\}$, and its conditional value at
risk (CVaR) is
\begin{equation}
  \CVaR_\alpha(X)=\inf_{\eta\in\mathbb R}
  \left\{\eta+\frac{\E[(X-\eta)_+]}{1-\alpha}\right\}.
  \label{eq:cvar_general}
\end{equation}
Here $(x)_+=\max(x,0)$. For a continuous distribution, CVaR is the mean above
VaR: $\E[X\mid X\geq\VaR_\alpha(X)]$~\citep{tamar2015policygradientcoherentrisk}.
Hereafter, $\Phi$ and $\phi$ denote the cumulative distribution function and
probability density function of a standard normal variable.
For Gaussian $X\sim\mathcal{N}(\mu,\sigma^2)$,
$\VaR_\alpha=\mu+\Phi^{-1}(\alpha)\sigma$ and
$\CVaR_\alpha=\mu+\sigma\phi(\Phi^{-1}(\alpha))/(1-\alpha)$.
For $X\sim\mathcal{N}(\mu,\sigma^2)$ and reference level $K\in\mathbb{R}$,
the Bachelier expected positive excess~\citep{bachelier1900} is
\begin{equation}
  \begin{aligned}
  \mathrm{Call}(K;\mu,\sigma)
  &= \E[(X-K)_+] \\
  &= (\mu-K)\Phi(d)+\sigma\phi(d),\qquad
  d=\frac{\mu-K}{\sigma}.
  \end{aligned}
  \label{eq:bachelier}
\end{equation}
\Suppref{Appendix}{app:bachelier_derivation} gives its derivation, derivatives,
and financial interpretation.

\subsection{Critic Disagreement and Bachelier-Inspired Shaping}

For a fixed state--action pair, separately initialized critics trained with
bootstrap masks can produce different predictions of future cumulative cost.
Their disagreement measures sensitivity to training data and critic optimization. It can
reflect limited coverage or unstable training, but neither proves that an action
is unsafe nor captures aleatoric uncertainty, the irreducible randomness in
future outcomes. We use it only to mark predictions that warrant caution in the
actor update.

\paragraph{Ensemble statistics.}
\bcppo{} trains $M$ independently initialized, bootstrapped scalar cost critics
$\{Q_c^{(m)}(s,a)\}_{m=1}^{M}$ that take $(s,a)$ as input and estimate expected
discounted future cost. They share the same scalar TD-target definition, but
independent initialization and per-member binary bootstrap masks expose them to
different selections of training samples: each mask randomly keeps or drops a sample from one
member's loss. Members share neither parameters nor gradients, and no extra loss
forces their predictions apart;
\suppref{Appendix}{app:implementation} gives exact masking and architectures.
For a given $(s,a)$, the ensemble mean and variance are:
\begin{align}
  \mu_c(s,a) &= \frac{1}{M}\sum_{m=1}^{M}Q_c^{(m)}(s,a), \label{eq:mu}\\
  \sigma_c^2(s,a) &= \frac{1}{M}\sum_{m=1}^{M}
    \!\left(Q_c^{(m)}(s,a)-\mu_c(s,a)\right)^{\!2}. \label{eq:sigma}
\end{align}
Each member produces one expected-cost prediction, not a return quantile or an
environment sample. Thus $\sigma_c$ measures disagreement across trained critics at
fixed $(s,a)$, not outcome variance. The cost advantage $A_c$ and PID handle
predicted and observed mean cost; only disagreement enters the actor penalty.
To prevent numerical collapse when all critics do converge, we enforce
$\sigma_c\leftarrow\max(\sigma_c,\sigma_{\min})$ with $\sigma_{\min}=10^{-6}$.

\paragraph{Bachelier expected-excess penalty.}
At each fixed $(s,a)$, we approximate the discrete critic outputs by
$Q_c(s,a)\sim\mathcal{N}(\mu_c,\sigma_c^2)$. We call this a \emph{local Gaussian
surrogate}; it approximates variation across trained critics, not randomness in
environment outcomes. We then define a state--action-dependent reference level,
called the \emph{strike} in the Bachelier formula:
\begin{equation}
  K(s,a)=\mu_c(s,a)+c_0\sigma_c(s,a),
  \qquad c_0=\Phi^{-1}(\alpha)+\kappa,
  \label{eq:strike}
\end{equation}
where $\Phi^{-1}(\alpha)$ is the standard-normal $\alpha$-quantile and
$\kappa\geq 0$ shifts the reference farther into the upper tail. Thus $c_0$
sets where excess begins within the Gaussian approximation.
The Bachelier-inspired shaping penalty is the expected excess beyond this strike:
\begin{equation}
  \begin{aligned}
  \RB(s,a)
  &= \mathrm{Call}(K;\mu_c,\sigma_c)\\
  &= (\mu_c-K)\Phi(d_\kappa)+\sigma_c\,\phi(d_\kappa),\\
  d_\kappa&=-c_0=-\Phi^{-1}(\alpha)-\kappa.
  \end{aligned}
  \label{eq:bachelier_penalty}
\end{equation}
Substituting Eq.~(\ref{eq:strike}) gives the implemented one-dimensional form:
\begin{equation}
  \RB(s,a)
  = \sigma_c(s,a)
    \bigl[\phi(-c_0)-c_0\Phi(-c_0)\bigr],
  \label{eq:bachelier_simplified}
\end{equation}
The mean cancels because $A_c$ already supplies predicted cost, avoiding a
second count of the same signal.
For fixed $(\alpha,\kappa)$, $h(c_0)=\phi(-c_0)-c_0\Phi(-c_0)$ is a positive
constant and
\begin{equation}
  \beta_{\mathrm{eff}}=\beta h(c_0),\qquad
  \beta\RB=\beta_{\mathrm{eff}}\sigma_c,
  \label{eq:effective_beta}
\end{equation}
so the implemented signal is an analytically scaled disagreement penalty.
Settings with the same $\beta_{\mathrm{eff}}$ produce the same actor
coefficient, so this update cannot distinguish separate effects of
$\alpha$, $\kappa$, and $\beta$.
Bachelier supplies expected-excess and local Gaussian CVaR-excess
interpretations, not another uncertainty statistic or a safety guarantee.

Before entering the actor update, $\RB$ is detached from the computation graph:
$\stopgrad{\RB}(s,a)=\mathrm{stopgrad}(\RB(s,a))$.
Thus gradients from the actor loss do not update ensemble-critic parameters or
alter their TD targets.

\subsection{Actor-Critic Learning and Mean-Cost Control}

\bcppo{} retains PPO's state-value networks $V_r$ and $V_c$, which predict
future reward and cost from a state and are used for Generalized Advantage
Estimation (GAE). The action-conditioned critics $\{Q_c^{(m)}\}$ instead provide
disagreement for a specific state--action pair.

\paragraph{Standard TD cost-critic learning.}
All ensemble critics are trained with standard TD targets, independent of the
Bachelier-inspired shaping penalty:
\begin{align}
  y_t &= c_t + \gamma(1-d_t)\,\widehat{V}_c(s_{t+1}), \label{eq:td_target}\\
  L_{\mathrm{cost}}^{(m)} &=
    \frac{\sum_{t\in B}z_{t}^{(m)}
    \left(Q_c^{(m)}(s_t,a_t)-y_t\right)^2}
    {\max\{1,\sum_{t\in B}z_{t}^{(m)}\}}. \label{eq:critic_loss}
\end{align}
Here $B$ is the minibatch and $z_t^{(m)}\sim\mathrm{Bernoulli}(0.8)$ is
independently resampled for every member and sample at each minibatch update.
The flag $d_t$ marks termination, and $\widehat{V}_c$ is the detached rollout-time
cost-value prediction; no separate target network is maintained. All critics
share $y_t$ but have independent parameters, initializations, and masks.

\paragraph{Branch-normalized hybrid advantage.}
Let $A_r$ and $A_c$ denote the reward and cost advantages---estimates of how
much better an action is than its state baseline---computed via
GAE~\citep{schulman2018highdimensionalcontinuouscontrolusing}.
\bcppo{} constructs a branch-normalized hybrid advantage:
\begin{equation}
  \ABP = \norm{A_r} - \lambda\cdot\norm{A_c + \beta\,\stopgrad{\RB}},
  \label{eq:hybrid_advantage}
\end{equation}
where $\norm{\cdot}$ denotes batch zero-mean unit-variance normalization,
$\lambda\geq 0$ is the controller-set weight on the cost branch, and $\beta>0$
weights disagreement within that branch.
The branches are normalized independently so large reward magnitudes do not
suppress the disagreement signal.
Because $\lambda$ multiplies the complete cost-plus-disagreement branch,
disagreement affects the actor only when the mean-cost multiplier is positive.
When $\lambda=0$, the actor update reduces to reward PPO; when the mean-cost
controller sets $\lambda>0$, disagreement changes which sampled actions receive
a stronger safety penalty. The spread term is therefore not an independent tail
constraint.
The actor is updated with the PPO clipped objective~\citep{schulman2017proximalpolicyoptimizationalgorithms}:
\begin{equation}
  L_{\mathrm{actor}} = -\E\!\left[\min\!\left(\rho_t\ABP,\;
    \mathrm{clip}(\rho_t,1-\epsilon,1+\epsilon)\ABP\right)\right],
  \label{eq:ppo_loss}
\end{equation}
where $\rho_t=\pi_\theta(a_t\mid s_t)/\pi_{\theta_{\mathrm{old}}}(a_t\mid s_t)$.
The scalar $\epsilon>0$ is the PPO clipping radius.

\paragraph{Anti-windup PID Lagrangian controller.}
The mean-cost weight $\lambda$ is updated after every batch of environment
transitions collected by the current policy. Let $B_k$ denote this rollout batch
at cycle $k$ and
$e_k=\mathrm{mean}(c_t\in B_k)-\Clim$ its mean-cost error.
The controller first forms a candidate integral and projected update:
\begin{equation}
  \begin{aligned}
  \widetilde I_k &= I_{k-1}+e_k,\\
  \widetilde u_k &= \lambda_k+K_Pe_k+K_I\widetilde I_k
    +K_D(e_k-e_{k-1}),\\
  \lambda_{k+1} &= \Pi_{[0,\lambda_{\max}]}(\widetilde u_k).
  \end{aligned}
  \label{eq:pid}
\end{equation}
Here $K_P$, $K_I$, and $K_D$ are the proportional, integral, and derivative
gains, and $\Pi_{[0,\lambda_{\max}]}$ clips its argument to the displayed
interval.
Integral wind-up occurs when $\lambda$ reaches a boundary but its integral term
keeps accumulating.
\bcppo{} applies \emph{anti-windup}: if $\widetilde u_k$ lies beyond a projection
boundary and $e_k$ points farther outward, the stored state remains
$I_k=I_{k-1}$; otherwise the candidate integral is accepted. If the multiplier
is saturated and the error reverses direction, the stored integral is clipped
to help it return to the allowed range. We set $\lambda_{\max}=50$.

Algorithm~\ref{alg:bcppo} summarizes the full update; implementation choices
and default hyperparameters are reported with the experimental setup.

\begin{algorithm}[t]
\caption{\bcppo{} Update Step}
\label{alg:bcppo}
\begin{algorithmic}[1]
\STATE \textbf{Input:} Policy $\pi_\theta$; value networks $V_r,V_c$;
  critics $\{Q_c^{(m)}\}_{m=1}^{M}$; PID state; multiplier $\lambda$
\STATE Collect rollout batch $\mathcal{B}=\{(s_t,a_t,r_t,c_t,s_{t+1},d_t)\}$
  under $\pi_\theta$
\STATE Compute $A_r,A_c$ via GAE; compute $\hat{V}_c(s_{t+1})$ for TD targets
\STATE Update multiplier: $e_k\!\leftarrow\!\mathrm{mean}(c_t)\!-\!\Clim$;
  apply Eq.~(\ref{eq:pid}) with anti-windup logic
\FOR{each PPO epoch (one pass through the rollout data) and minibatch}
  \STATE Compute $\mu_c,\sigma_c$ from the current critics via
    Eqs.~(\ref{eq:mu})--(\ref{eq:sigma}); clamp
    $\sigma_c\leftarrow\max(\sigma_c,10^{-6})$
  \STATE Compute $K$ and $\RB$ via
    Eqs.~(\ref{eq:strike})--(\ref{eq:bachelier_penalty}); detach $\RB$
  \STATE Construct $\ABP$ and minimize $L_{\mathrm{actor}}$ via
    Eqs.~(\ref{eq:hybrid_advantage})--(\ref{eq:ppo_loss})
  \STATE Compute TD target $y_t$ via Eq.~(\ref{eq:td_target})
  \FOR{each critic $m=1,\ldots,M$}
    \STATE Minimize its bootstrap-masked TD loss
      $L_{\mathrm{cost}}^{(m)}$ via Eq.~(\ref{eq:critic_loss})
  \ENDFOR
\ENDFOR
\end{algorithmic}
\end{algorithm}

\subsection{Computational Complexity and Theoretical Takeaways}
\label{sec:complexity}

\paragraph{Computational cost.}
\bcppo{} uses the same actor-only forward pass as vanilla PPO at deployment;
the ensemble is discarded after training. For batch size $B$, its training
the overhead is $\mathcal O(BM)$ scalar-critic evaluation, TD learning, and
aggregation, with $\mathcal O(M)$ critic storage. Since
$\RB=\sigma_c h(c_0)$, the Bachelier map itself is $\mathcal O(B)$ and its
scalar coefficient can be precomputed. This retains ordinary gradient-based PPO updates:
unlike Monte Carlo CVaR it needs no trajectory sorting, unlike QR PPO it needs
no $N_q$-output quantile loss, and unlike CPO it needs no second-order
constraint solve. Under our defaults it uses
$M=5$ scalar outputs instead of QR PPO's $N_q=32$ quantile outputs per
transition. These operation counts do not claim that \bcppo{} is always faster
in elapsed time; full operation counts and measured inference latency are
reported together in \suppref{Appendix}{app:complexity}.

\paragraph{Theoretical takeaways.}
Full statements and proofs appear in \suppref{Appendix}{app:proofs}. They establish:
\begin{enumerate}
  \item \textbf{Smooth explicit sensitivity.}
  The state--action-dependent reference level reduces the penalty to
  $\RB=\sigma_c h(c_0)$: sensitivity to $\mu_c$ vanishes and sensitivity to
  $\sigma_c$ is finite
  (\suppref{Proposition}{prop:smoothness}).
  \item \textbf{A Gaussian CVaR-excess interpretation.}
  For $\kappa=0$, $\RB=(1-\alpha)(\CVaR_\alpha-\VaR_\alpha)$ under the local
  Gaussian approximation to critic outputs; $\kappa>0$ shifts the excess
  threshold. This is not an exact
  trajectory-CVaR constraint (\suppref{Proposition}{prop:cvar_connection}).
  \item \textbf{Scale separation.}
  Branch normalization is invariant to independent positive rescaling of its
  two branches (\suppref{Remark}{prop:scale_equiv}). Auxiliary results rely on
  stated approximations or idealized assumptions and do not establish
  numerical agreement with true error or risk, convergence, or safety.
\end{enumerate}

\section{Experiments}
\label{sec:experiments}

\subsection{Setup}

\paragraph{Implementation summary.}
Unless stated otherwise, \bcppo{} uses $M=5$ critics,
$(\alpha,\beta,\kappa)=(0.95,0.15,0.15)$, PPO clipping
$\epsilon=0.2$, and $\lambda_{\max}=50$. Algorithm~\ref{alg:bcppo} specifies
the update, while complete architectures, optimizer settings, and
hyperparameters appear in \suppref{Appendix}{app:implementation}.

\paragraph{Environments and protocol.}
We evaluate on five continuous-control tasks spanning Safety-Gymnasium
navigation (Goal1, Button1, Push1) and MuJoCo locomotion (Hopper-v4, Ant-v4).
Every method is trained for $1\mathrm{M}$ steps with five independent random
seeds and evaluated
under the same rare-event cost wrapper and, within each environment, the same
cost limit.
These shared tasks, costs, budgets, interaction counts, and evaluation seeds
form the common comparison protocol; details are in
\suppref{Appendix}{app:implementation}.

\paragraph{Rare-event cost wrapper.}
Rather than relying on built-in Safety-Gymnasium event costs, we use the
controlled cost
\begin{equation}
\label{eq:cost_wrapper}
c(s,a)=c_a\|a\|_2^2+c_b\,\mathbf{1}\{\|o(s)\|_2>15\},
\end{equation}
where $c_a,c_b>0$ are environment-specific coefficients and $o(s)$ denotes the
observation extracted from state $s$.
The indicator adds an occasional bounded cost spike to the small action cost, so
a policy can satisfy a mean-cost budget while still exhibiting a large upper
tail. We use the common
threshold as a controlled stress test, not as a physically identical boundary
definition across the five observation spaces.
Original environment costs are logged as diagnostics but are not optimized.

\paragraph{Baselines.}
Table~\ref{tab:main_core} compares maintained mean-cost baselines (PPOLag, CPO,
and FOCOPS), our PPO implementations of CPPO and QR PPO, and tail-aware WCSAC
\citep{Yang_Simao_Tindemans_Spaan_2021}. CPPO uses Monte Carlo upper-tail
weighting; QR PPO uses an action-conditioned quantile cost critic. For OmniSafe,
we convert both the constraint statistic and cost-advantage branch from
episodic-cost units to the per-step units used by our wrapper, while leaving
each maintained algorithm update unchanged. WCSAC is our PyTorch reproduction,
checked against its published Gaussian equations and a fixed version of the
public code; it is not an author-supplied run. All methods follow the common
comparison protocol.

\paragraph{Metrics.}
\textbf{Return} measures task performance (higher is better).
\textbf{Cost rate} is episode-average step cost (lower is better).
\textbf{Safety Rate} is the fraction satisfying the budget,
$\Pr(\mathrm{Cost\ rate}\leq\Clim)$, while \textbf{CVaR@95} is the mean cost
rate in the worst $5\%$ tail. \textbf{Worst Gap}
$=\widehat{\mathrm{CVaR}}_{0.95}-\Clim$ measures tail severity relative to the
budget. The main table reports Return, Cost rate, and CVaR@95; supplementary
tables retain all metrics. Each policy uses 20 fixed evaluation seeds
($10000$--$10019$) and
deterministic actions, so per-run CVaR@95 is the largest observed cost rate;
tables aggregate over five trained policies.

\subsection{Main Results}

\newcommand{\pmv}[2]{#1\,$\pm$\,#2}
\newcommand{\pmvb}[2]{\textbf{#1}\,$\boldsymbol{\pm}$\,\textbf{#2}}
\begin{table*}[t]
\centering
\begingroup
\small
\setlength{\tabcolsep}{2.2pt}
\renewcommand{\arraystretch}{1.08}
\begin{tabular}{@{}llccccc@{}}
\toprule
\textbf{Method} & \textbf{Metric}
& \textbf{Goal1} & \textbf{Button1} & \textbf{Push1}
& \textbf{Hopper} & \textbf{Ant} \\
\midrule
\multirow{3}{*}{\textbf{\bcppo{}}} & Return & \pmvb{17.37}{6.48} & \pmvb{15.28}{5.43} & \pmvb{0.72}{0.53} & \pmvb{772.53}{229.81} & \pmvb{1286.97}{320.40} \\
& Cost & \pmv{0.123}{0.254} & \pmv{0.152}{0.083} & \pmv{0.098}{0.129} & \pmv{0.012}{0.002} & \pmv{0.117}{0.192} \\
& CVaR@95 & \pmvb{0.173}{0.361} & \pmvb{0.254}{0.125} & \pmvb{0.198}{0.203} & \pmvb{0.013}{0.002} & \pmvb{0.159}{0.236} \\
\addlinespace[1.5pt]
\multirow{3}{*}{CPPO} & Return & \pmvb{17.10}{6.36} & \pmv{14.16}{4.57} & \pmv{0.19}{0.94} & \pmvb{607.33}{133.97} & \pmvb{910.18}{235.03} \\
& Cost & \pmv{0.021}{0.021} & \pmv{0.177}{0.085} & \pmv{0.128}{0.121} & \pmv{0.008}{0.001} & \pmv{0.025}{0.018} \\
& CVaR@95 & \pmvb{0.053}{0.074} & \pmv{0.275}{0.093} & \pmv{0.284}{0.145} & \pmvb{0.009}{0.001} & \pmvb{0.067}{0.067} \\
\addlinespace[1.5pt]
\multirow{3}{*}{QR PPO} & Return & \pmv{9.96}{9.46} & \pmvb{18.60}{2.07} & \pmv{0.32}{0.40} & \pmv{533.26}{197.22} & \pmv{797.36}{134.50} \\
& Cost & \pmv{0.156}{0.220} & \pmv{0.273}{0.174} & \pmv{0.143}{0.190} & \pmv{0.010}{0.001} & \pmv{0.050}{0.088} \\
& CVaR@95 & \pmv{0.349}{0.554} & \pmvb{0.416}{0.257} & \pmv{0.282}{0.336} & \pmv{0.011}{0.001} & \pmv{0.123}{0.238} \\
\addlinespace[1.5pt]
\multirow{3}{*}{PPOLag} & Return & \pmv{18.75}{7.04} & \pmv{14.75}{4.69} & \pmv{0.41}{0.33} & \pmvb{3498.48}{118.00} & \pmvb{3943.41}{647.48} \\
& Cost & \pmv{1.461}{0.337} & \pmv{0.997}{0.246} & \pmv{0.435}{0.301} & \pmv{0.089}{0.063} & \pmv{0.480}{0.258} \\
& CVaR@95 & \pmv{2.013}{0.276} & \pmv{1.353}{0.283} & \pmv{1.332}{0.852} & \pmvb{0.099}{0.074} & \pmvb{0.540}{0.269} \\
\addlinespace[1.5pt]
\multirow{3}{*}{CPO} & Return & \pmvb{25.09}{1.88} & \pmvb{20.38}{3.59} & \pmv{0.31}{0.31} & \pmvb{2341.07}{1353.40} & \pmvb{4573.81}{594.98} \\
& Cost & \pmv{1.117}{0.865} & \pmv{1.049}{0.236} & \pmv{0.691}{0.428} & \pmv{0.013}{0.002} & \pmv{0.652}{0.242} \\
& CVaR@95 & \pmvb{1.695}{1.714} & \pmvb{1.455}{0.367} & \pmv{1.464}{0.747} & \pmvb{0.015}{0.004} & \pmvb{0.749}{0.229} \\
\addlinespace[1.5pt]
\multirow{3}{*}{FOCOPS} & Return & \pmv{16.33}{7.45} & \pmv{10.01}{5.50} & \pmv{\textminus{}0.29}{0.52} & \pmv{3222.89}{651.22} & \pmv{3792.30}{461.57} \\
& Cost & \pmv{2.358}{0.687} & \pmv{1.609}{0.307} & \pmv{2.290}{1.344} & \pmv{0.138}{0.059} & \pmv{0.596}{0.261} \\
& CVaR@95 & \pmv{3.463}{0.874} & \pmv{2.457}{1.119} & \pmv{6.203}{3.413} & \pmv{0.180}{0.068} & \pmv{0.674}{0.271} \\
\addlinespace[1.5pt]
\multirow{3}{*}{WCSAC$^{\dagger}$} & Return & \pmvb{\textminus{}0.01}{0.16} & \pmv{\textminus{}4.80}{11.15} & \pmv{\textminus{}2.25}{3.03} & \pmvb{3128.71}{666.13} & \pmv{\textminus{}194.82}{69.22} \\
& Cost & \pmv{0.006}{0.001} & \pmv{1.256}{1.964} & \pmv{0.608}{0.628} & \pmv{0.010}{0.002} & \pmv{3.906}{0.320} \\
& CVaR@95 & \pmvb{0.006}{0.001} & \pmv{1.384}{2.138} & \pmv{1.078}{0.770} & \pmvb{0.016}{0.009} & \pmv{4.443}{0.244} \\
 
\bottomrule
\end{tabular}
\endgroup
\caption{
	  Shared-protocol final results over 175 runs (mean$\pm$standard deviation
  across five training seeds after $1\mathrm{M}$ interactions; 20 fixed
  evaluation seeds).
  Higher Return is better, whereas lower Cost rate and CVaR@95 are better.
  Bold Return/CVaR entries are not jointly worse than another displayed mean.
  $^\dagger$WCSAC is our checked Gaussian reproduction, not an author-supplied
  run. Values are regenerated from raw episodes.
}
\label{tab:main_core}
\end{table*}

Table~\ref{tab:main_core} reports the 175 runs; Figure~\ref{fig:pareto_frontier}
shows reward--CVaR positions, and \suppref{Appendix}{app:training_curves} shows
training. Because Eq.~(\ref{eq:cost_wrapper}) can produce rare boundary costs
despite an acceptable mean, we report the trade-off rather than one scalar
winner.

\paragraph{Reward--CVaR pairs and trade-offs.}
Here one method empirically dominates another if its five-seed mean has no lower
Return and no higher CVaR, with at least one strict inequality.
At the five-seed mean level, \bcppo{} is non-dominated in all five tasks: it
empirically dominates 13 of 30 comparator--task means and trades reward against
CVaR in the other 17. It dominates all six comparators on Push1, four on
Button1, two on Goal1, and WCSAC on Ant. Its navigation CVaR is below PPOLag,
CPO, and FOCOPS, whereas Hopper and Ant baselines often exchange
higher tail cost for higher return. WCSAC's results characterize only our
wrapper-specific reproduction. Push1 provides \bcppo{}'s clearest joint
advantage.

\begin{figure*}[t]
\centering
\includegraphics[width=\textwidth]{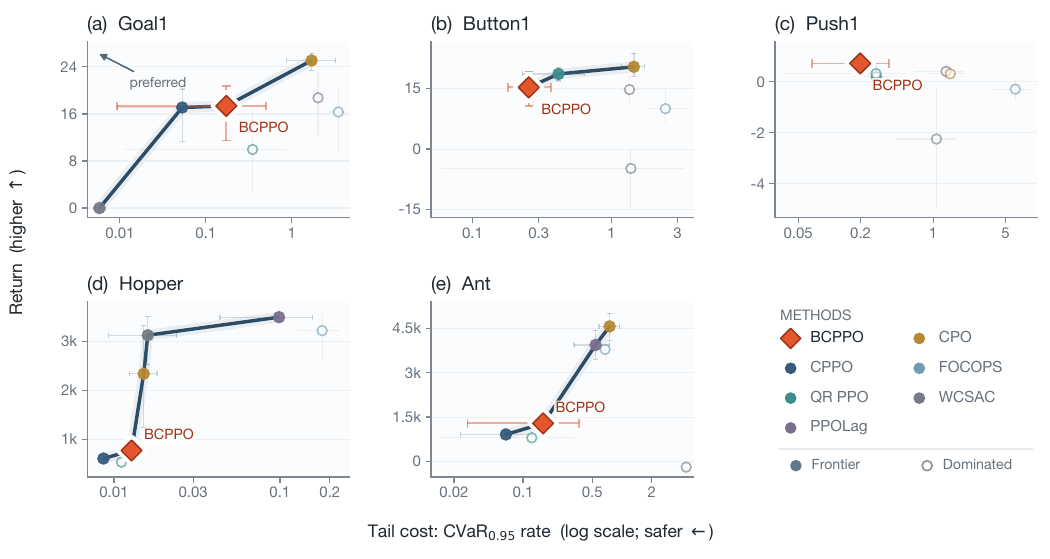}
\caption{
  Five-seed mean reward--CVaR trade-offs; bars are 95\% bootstrap CIs across
  training runs. Upper-left is preferred; filled markers and dark segments show
  the finite-set empirical frontier. Orange diamonds mark \bcppo{}; WCSAC is our
  checked Gaussian reproduction. Log axes preserve dominance.
}
\label{fig:pareto_frontier}
\end{figure*}

\paragraph{Statistical scope.}
This frontier is descriptive: only Push1--FOCOPS, Push1--WCSAC, and
Ant--WCSAC have 95\% intervals, formed by resampling the five training seeds,
that favor \bcppo{} on both axes. These intervals do not correct for simultaneous
reward--CVaR testing or multiple comparisons. The supported
claim is therefore a non-dominated five-seed mean pair in each task---not
maximal reward, minimal CVaR, or universal superiority.

\paragraph{Mean cost versus the observed tail.}
Mean cost and CVaR move together in some cells but not enough to make one
redundant. At the run level, 18 of 175 final policies satisfy the mean-cost
budget while their empirical CVaR exceeds it, directly exhibiting the motivating
gap under this wrapper. Conversely, most method--task cells do not show that
pattern, and the table does not imply that every expectation-constrained method
must have worse tail behavior. We therefore report Return, mean Cost, and CVaR
separately.

\paragraph{Training and deployment cost.}
\bcppo{} trains five cost critics but deploys only the PPO actor.
\Suppref{Appendix}{app:inference} reports noisy, overlapping policy latencies;
they do not establish a speed advantage over CPPO or QR PPO.

\subsection{Ablation Summary}

Five-seed, 500k-step ablations on Button1, Push1, and Hopper test component
removal, uncertainty placement, the effective coefficient, and ensemble size;
full results appear in \suppref{Appendix}{app:ablations}.

\paragraph{Spread penalty and placement.}
Removing the spread penalty changes the reward--CVaR trade-off rather than
uniformly worsening risk: the full method favors return on Button1, lower CVaR
on Push1, and improves both means on Hopper with large variation across seeds.
Placement is
more consistent. Turning disagreement into a reward bonus raises navigation
CVaR, most clearly on Push1; Hopper is nearly unchanged. The penalty is therefore
an environment-dependent conservative bias, not a universal tail reducer.

\paragraph{Normalization and mean-cost control.}
Removing branch normalization improves Button1 but produces the largest Push1
instability (CVaR@95 $0.934\pm1.479$) and degrades Hopper; its role is scale
control, not universal CVaR reduction. Anti-windup is more consistent: disabling
it raises mean CVaR in all three environments, most on Button1 ($0.414$ to
$0.668$). Overlapping five-seed variation prevents a significance claim.

\paragraph{Coefficient and ensemble sensitivity.}
The $\beta$, $\alpha$, and $\kappa$ sweeps all change
$\beta_{\mathrm{eff}}=\beta h(\Phi^{-1}(\alpha)+\kappa)$, not three independent
mechanisms. Non-monotone Push1 results make the default a single shared
coefficient choice, not an environment-specific optimum. With $M=1$ spread stays
at its numerical floor and
CVaR variance is largest; $M=10$ gives no clear gain over $M=5$. A
direct $\sigma_c$ penalty with its coefficient set to
$\beta_{\mathrm{eff}}$ is algebraically identical. Keeping the same numerical
$\beta$ instead makes the direct penalty $69.2\times$ stronger and tests scale,
not additional risk information.

\paragraph{Does disagreement respond to missing coverage?}
For 15 fixed policies, we retrain ensembles after structured data removal and
equal-size controls. We use area under the receiver-operating-characteristic
curve (AUC) to measure how well disagreement identifies the deliberately omitted
region on separate evaluation episodes. The AUC gains are $0.168$
and $0.099$ (95\% CIs $[0.118,0.218]$ and $[0.044,0.153]$). Against matched
one-standard-deviation target-noise controls, the gains remain $0.126$ and
$0.110$. Missing coverage therefore raises disagreement in this test, but does
not make it a general error score or tail-event probability; details appear in
\suppref{Appendix}{app:sigma_calibration}.

\paragraph{Does the penalty steer actor updates?}
In a paired one-update test, we clone each actor; treatment enables the
disagreement penalty and control disables it, while rollout, split, optimizer
initialization, and minibatch order are matched.
For five saved models
with $\lambda>0$, larger-$\RB$ actions receive lower treatment-minus-control log
probability after removing the rank association with predicted mean cost
(Spearman rank correlation, which compares monotone rank orderings,
$-0.119$, 95\% CI $[-0.216,-0.039]$; all five
negative).
Four saved models with $\lambda=0$ and 72 both-disabled checks have exactly zero
parameter difference. An ordinary linear-regression check is inconclusive, so
this establishes the direction of one update, not lower final-policy CVaR;
\suppref{Appendix}{app:actor_intervention} gives the protocol.

\section{Discussion}
\label{sec:discussion}

\subsection{Justification}

\bcppo{} separates predicted cost, handled by the cost advantage and multiplier,
from detached sensitivity to critic training; Bachelier supplies its
expected-excess coefficient. The method has a non-dominated mean reward--CVaR
pair, and controlled retraining identifies omitted regions more readily than
equal-size controls. Mixed ablations and a multiplier positive in only 0.1\% of
logged Ant updates prevent attributing every frontier result to spread alone.
The method targets sparse, costly, weakly covered violations when training can
afford multiple critics but deployment requires one policy.

\subsection{Limitations}

The Gaussian approximation can miss multimodal or heavy-tailed structure.
Agreement can reflect shared bias, and disagreement can reflect benign
instability. Its value is therefore a cautionary training signal, not an error
probability, formal safety guarantee, or control for well-covered aleatoric
hazards.
Because the multiplier gates the full cost--disagreement branch, $\lambda=0$
makes the actor update reward-only, so \bcppo{} provides no independent tail-risk
pressure while the mean-cost constraint is inactive.

The wrapper is a controlled stress test, not a physical hazard model. With 20
evaluation episodes, CVaR@95 is the largest observed cost, and five seeds give
coarse uncertainty estimates. CPPO and QR PPO span two optimizer settings,
WCSAC is our reproduction, and shared hyperparameters are not task-optimal.
OmniSafe cost-rate units are aligned, but controllers were not jointly retuned
or budget-swept, so violations describe these configurations. The paired
intervention tests one post-training update with a newly initialized optimizer,
not downstream CVaR.

\subsection{Future Work}

Future work should combine disagreement with distributional cost critics and a
separate constraint active when the mean-cost multiplier is zero. Controlled
continuations could test downstream CVaR; broader evaluation should add physical
hazards, wrapper sweeps, seeds, and episodes. Shared features or distillation
could reduce cost, while analysis should connect coverage to realized tails.

\section{Conclusion}
\label{sec:conclusion}

\bcppo{} combines critic-disagreement shaping with anti-windup mean-cost
control. Across five tasks, no tested comparator has both higher mean return and
lower mean CVaR; deliberate data-removal controls also show increased
disagreement in omitted regions. The evidence supports a practical
reward--caution balance with policy-only deployment, not universal tail-risk
reduction or a safety guarantee.

\bibliography{references}

\ifdefined\WITHAPPENDIX
\clearpage
\appendix
\raggedbottom
\setcounter{topnumber}{4}
\setcounter{bottomnumber}{2}
\setcounter{totalnumber}{6}
\setcounter{dbltopnumber}{4}
\renewcommand{\topfraction}{0.95}
\renewcommand{\bottomfraction}{0.90}
\renewcommand{\textfraction}{0.04}
  \renewcommand{\floatpagefraction}{0.55}
  \renewcommand{\dbltopfraction}{0.95}
  \renewcommand{\dblfloatpagefraction}{0.55}
  \setlength{\textfloatsep}{9pt plus 2pt minus 2pt}
  \setlength{\floatsep}{9pt plus 2pt minus 2pt}
  \setlength{\dbltextfloatsep}{8pt plus 2pt minus 2pt}
  \setlength{\dblfloatsep}{6pt plus 1pt minus 1pt}
  \setlength{\intextsep}{9pt plus 2pt minus 2pt}
  \makeatletter
  \setlength{\@fptop}{0pt}
  \setlength{\@fpsep}{5pt plus 1pt}
  \setlength{\@fpbot}{0pt plus 1fil}
  \setlength{\@dblfptop}{0pt}
  \setlength{\@dblfpsep}{5pt plus 1pt}
  \setlength{\@dblfpbot}{0pt plus 1fil}
\makeatother
\section*{Appendix}

\section{Theoretical Properties and Proofs}
\label{app:theory}

\subsection{Proofs of Propositions}
\label{app:proofs}

This appendix gives the formal statements and proofs of the three analytical
properties referenced in the main text.

\begin{proposition}[Explicit Smoothness and Bounded Local Sensitivity]
\label{prop:smoothness}
Let $\mathrm{Call}(K;\mu,\sigma)$ denote the Bachelier expected-excess term
in Eq.~(\ref{eq:bachelier}) with $\sigma>0$.
For a fixed strike $K$, its partial derivatives are
\[
 \frac{\partial \mathrm{Call}}{\partial \mu}=\Phi(d)\in[0,1],
 \qquad
 \frac{\partial \mathrm{Call}}{\partial \sigma}=\phi(d)\leq (2\pi)^{-1/2}.
\]
For the moving reference strike used by \bcppo{}, $K=\mu+c_0\sigma$ with
$c_0=\Phi^{-1}(\alpha)+\kappa$, the penalty reduces to
\[
 \RB(\mu,\sigma)=\sigma\,h(c_0),\qquad
 h(c_0)=\phi(-c_0)-c_0\Phi(-c_0),
\]
so $\partial \RB/\partial\mu=0$ and
$\partial \RB/\partial\sigma=h(c_0)$, a finite constant for fixed
$(\alpha,\kappa)$.
\end{proposition}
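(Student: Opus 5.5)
The plan is to differentiate the closed form in Eq.~(\ref{eq:bachelier}) directly, treating $d=(\mu-K)/\sigma$ as a function of $(\mu,\sigma)$, and then to substitute the moving strike. The one identity I will use repeatedly is $\phi'(x)=-x\phi(x)$, together with $\Phi'=\phi$.

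For fixed $K$, I write $\mathrm{Call}=\sigma\,[\,d\,\Phi(d)+\phi(d)\,]$. Set $g(d)=d\Phi(d)+\phi(d)$. Then
\[
g'(d)=\Phi(d)+d\phi(d)-d\phi(d)=\Phi(d),
\]
where the two $d\phi(d)$ terms cancel.

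\textbf{Derivative in $\mu$.} Since $\partial d/\partial\mu=1/\sigma$, the chain rule gives
\[
\frac{\partial \mathrm{Call}}{\partial\mu}=\sigma\,g'(d)\cdot\frac{1}{\sigma}=\Phi(d).
\]
This lies in $[0,1]$ because $\Phi$ is a cumulative distribution function.

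\textbf{Derivative in $\sigma$.} Since $\partial d/\partial\sigma=-d/\sigma$, the product rule gives
\[
\frac{\partial \mathrm{Call}}{\partial\sigma}=g(d)+\sigma\,\Phi(d)\Bigl(-\frac{d}{\sigma}\Bigr)=d\Phi(d)+\phi(d)-d\Phi(d)=\phi(d).
\]
The bound $\phi(d)\le\phi(0)=(2\pi)^{-1/2}$ holds because $\phi$ attains its maximum at $0$. An alternative cross-check is to differentiate $\E[(\mu+\sigma Z-K)_+]$ under the integral sign, which gives $\E[\mathbf 1\{X>K\}]$ and $\E[Z\mathbf 1\{X>K\}]$; these equal $\Phi(d)$ and $\phi(d)$ respectively.

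\textbf{Moving strike.} Substituting $K=\mu+c_0\sigma$ gives $\mu-K=-c_0\sigma$, so $d=-c_0$ is constant. Plugging into Eq.~(\ref{eq:bachelier}) yields
\[
\RB=-c_0\sigma\,\Phi(-c_0)+\sigma\,\phi(-c_0)=\sigma\,h(c_0),
\]
which matches Eq.~(\ref{eq:bachelier_simplified}). Because $h(c_0)$ depends only on $(\alpha,\kappa)$, we get $\partial\RB/\partial\mu=0$ and $\partial\RB/\partial\sigma=h(c_0)$ immediately. This also agrees with the total-derivative check $\Phi(d)-\Phi(d)\cdot 1=0$ in $\mu$ and $\phi(d)-\Phi(d)\,c_0=h(c_0)$ in $\sigma$, at $d=-c_0$.

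\textbf{Finiteness and positivity.} The remaining point, and the only step needing more than algebra, is to confirm that $h(c_0)$ is finite and, for the surrounding claims, positive. Finiteness is clear because $\Phi^{-1}(\alpha)$ is finite for $\alpha\in(0,1)$. For positivity, I will note that $h(c)=\E[(Z-c)_+]$ with $Z$ standard normal. This expectation is strictly positive since $Z$ has full support, and it is finite since $\E|Z|<\infty$. This identification settles both properties at once.
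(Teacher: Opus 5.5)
Your proof is correct and follows the paper's route. You differentiate the closed form, with the cancellation coming from $\phi'(d)=-d\phi(d)$, then substitute $K=\mu+c_0\sigma$ to get $d=-c_0$ and $\RB=\sigma h(c_0)$; your identification $h(c)=\E[(Z-c)_+]$ additionally certifies that $h(c_0)>0$, which the paper asserts in the main text but does not prove in this proposition.
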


\begin{proof}
For fixed $K$, differentiating Eq.~(\ref{eq:bachelier}) gives
$\partial \mathrm{Call}/\partial\mu=\Phi(d)$ and
$\partial \mathrm{Call}/\partial\sigma=\phi(d)$ after cancellation of the
terms containing $\partial d$. For the implemented strike
$K=\mu+c_0\sigma$, $d=-c_0$ and
$\RB=\sigma[\phi(-c_0)-c_0\Phi(-c_0)]$, so the sensitivity to $\mu$ is zero
and the sensitivity to $\sigma$ is the fixed scalar $h(c_0)$.
\end{proof}

\begin{proposition}[Connection to CVaR Excess Under Gaussian Surrogate]
\label{prop:cvar_connection}
Under the local Gaussian surrogate $Q_c(s,a)\sim\mathcal{N}(\mu_c,\sigma_c^2)$,
when $\kappa=0$ (i.e., the strike equals $\VaR_\alpha$):
\begin{equation}
  \begin{aligned}
  \RB\big|_{\kappa=0}
  &= \E[(Q_c-\VaR_\alpha)_+]\\
  &= (1-\alpha)\bigl(\CVaR_\alpha(Q_c)-\VaR_\alpha(Q_c)\bigr).
  \end{aligned}
 \label{eq:cvar_relation}
\end{equation}
For $\kappa>0$, the strike is shifted above $\VaR_\alpha$, so the
expected excess is smaller than the VaR-strike excess while measuring exceedance
beyond a higher reference threshold. This algebra alone does not imply that the
learned policy becomes more conservative as $\kappa$ increases.
\end{proposition}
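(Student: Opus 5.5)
The plan is to work entirely inside the local Gaussian surrogate $Q_c\sim\mathcal N(\mu_c,\sigma_c^2)$ with $\sigma_c>0$, so that both $\VaR_\alpha$ and $\CVaR_\alpha$ have the closed forms already recorded in the CMDP/CVaR subsection. The proof then reduces to algebraic identities plus one monotonicity argument.

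\textbf{Step 1: the first equality.} With $\kappa=0$, Eq.~(\ref{eq:strike}) gives $K=\mu_c+\Phi^{-1}(\alpha)\sigma_c$. This is exactly the Gaussian $\VaR_\alpha(Q_c)$ stated before Eq.~(\ref{eq:bachelier}). By the definition of $\RB$ as $\mathrm{Call}(K;\mu_c,\sigma_c)=\E[(Q_c-K)_+]$ in Eqs.~(\ref{eq:bachelier})--(\ref{eq:bachelier_penalty}), we get $\RB|_{\kappa=0}=\E[(Q_c-\VaR_\alpha)_+]$.

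\textbf{Step 2: the second equality, by two routes.}
\begin{itemize}
\item Via the Rockafellar--Uryasev formula Eq.~(\ref{eq:cvar_general}). For a continuous distribution the infimum is attained at $\eta=\VaR_\alpha$. Hence $\CVaR_\alpha=\VaR_\alpha+\E[(Q_c-\VaR_\alpha)_+]/(1-\alpha)$, and rearranging gives the claim. To justify attainment, differentiate the convex objective in $\eta$: the derivative is $1-\Pr(Q_c>\eta)/(1-\alpha)$, which vanishes at the $\alpha$-quantile.
\item Directly from the closed forms, as a cross-check. Set $z=\Phi^{-1}(\alpha)$, so $c_0=z$. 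Eq.~(\ref{eq:bachelier_simplified}) gives $\RB=\sigma_c[\phi(-z)-z\Phi(-z)]=\sigma_c[\phi(z)-z(1-\alpha)]$, using the symmetry $\phi(-z)=\phi(z)$ and $\Phi(-z)=1-\alpha$. On the other side, $(1-\alpha)(\CVaR_\alpha-\VaR_\alpha)=(1-\alpha)\bigl(\sigma_c\phi(z)/(1-\alpha)-z\sigma_c\bigr)$, which is the same expression.
\end{itemize}
I would present the direct computation as the main proof and the Rockafellar--Uryasev argument as the conceptual reading.

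\textbf{Step 3: the $\kappa>0$ claim.} The strike becomes $K=\VaR_\alpha+\kappa\sigma_c>\VaR_\alpha$. I would show that $\E[(Q_c-K)_+]$ is nonincreasing in $K$, in fact strictly decreasing for Gaussian $Q_c$, using either of two facts:
\begin{itemize}
\item pointwise, $(x-K)_+$ is nonincreasing in $K$;
\item equivalently, $h'(c_0)=-\Phi(-c_0)<0$. Differentiating $h(c_0)=\phi(-c_0)-c_0\Phi(-c_0)$ gives $h'(c_0)=c_0\phi(c_0)-\Phi(-c_0)-c_0\phi(c_0)=-\Phi(-c_0)$, the same derivative as $\partial\mathrm{Call}/\partial K=-\Phi(d)$ from Proposition~\ref{prop:smoothness}.
\end{itemize}
Since $\kappa>0$ raises $c_0$, the excess is smaller than the VaR-strike excess while being measured beyond the higher threshold $K$.

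\textbf{Step 4: the disclaimer.} The final sentence of the statement is not a mathematical claim requiring proof. I would only remark why the algebra does not settle it: a smaller $h(c_0)$ lowers $\beta_{\mathrm{eff}}$ in Eq.~(\ref{eq:effective_beta}), and the learned policy's conservativeness also depends on branch normalization, $\lambda$, and training dynamics.

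The only mildly delicate point is justifying attainment of the infimum at $\VaR_\alpha$ in Step~2. The direct Gaussian computation avoids this entirely.
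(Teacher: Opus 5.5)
Your proposal is correct, and your $\kappa>0$ argument matches the paper's: it also uses $\partial\mathrm{Call}/\partial K=-\Phi(d)$ together with $K$ increasing in $\kappa$. The real difference is in the second equality.

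The paper never uses the Gaussian closed forms. It splits $\int_{\VaR_\alpha}^{\infty}(x-\VaR_\alpha)f(x)\,dx$ into $\int_{\VaR_\alpha}^{\infty}x f(x)\,dx-\VaR_\alpha\int_{\VaR_\alpha}^{\infty}f(x)\,dx$. It then identifies the two pieces as $(1-\alpha)\CVaR_\alpha$ and $(1-\alpha)\VaR_\alpha$ via the tail-mean characterization $\CVaR_\alpha=\E[X\mid X\ge\VaR_\alpha]$. That argument works for any continuous law. However, it relies on a characterization the paper quotes rather than derives from its own definition, Eq.~(\ref{eq:cvar_general}).

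Your main route is Gaussian-specific and relies instead on the quoted closed forms for $\VaR_\alpha$ and $\CVaR_\alpha$. In exchange, it lands directly on the implemented expression $\sigma_c h(c_0)$ with $c_0=\Phi^{-1}(\alpha)$, so the identity is tied to what BCPPO actually computes. Your Rockafellar--Uryasev route is as general as the paper's. It is also the only one of the three that starts from the paper's stated definition of CVaR, and you justify attainment at $\eta=\VaR_\alpha$ by convexity and the first-order condition.

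You also make explicit two points the paper leaves implicit. First, the $\kappa=0$ strike is exactly the Gaussian $\VaR_\alpha$, which gives the first equality. Second, the decrease is strict because $\Phi(-c_0)>0$; the paper records only $-\Phi(d)\le 0$, although the statement says ``smaller''.

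One small repair: your displayed computation of $h'(c_0)$ has two compensating sign errors. In fact $\frac{d}{dc_0}\phi(-c_0)=-c_0\phi(c_0)$ and $\frac{d}{dc_0}\bigl[-c_0\Phi(-c_0)\bigr]=-\Phi(-c_0)+c_0\phi(c_0)$. The final value $-\Phi(-c_0)$ is right, but the intermediate line should be corrected.
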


\begin{proof}
For $X\sim\mathcal{N}(\mu,\sigma^2)$:
\begin{align*}
  \E[(X-\VaR_\alpha)_+]
  &= \int_{\VaR_\alpha}^\infty (x-\VaR_\alpha)f(x)\,dx \\
  &= \int_{\VaR_\alpha}^\infty x\,f(x)\,dx\\
  &\quad - \VaR_\alpha
    \int_{\VaR_\alpha}^\infty f(x)\,dx \\
  &= (1-\alpha)\CVaR_\alpha(X)-\VaR_\alpha(1-\alpha) \\
  &= (1-\alpha)\CVaR_\alpha(X)\\
  &\quad -(1-\alpha)\VaR_\alpha(X).
\end{align*}
Monotonicity in $\kappa$ follows because $K$ increases with $\kappa$, while
\[
  \frac{\partial \mathrm{Call}}{\partial K}=-\Phi(d)\leq 0 .
\]
Thus the expected excess decreases as the reference threshold increases.
\end{proof}

\begin{remark}[Scale-Equivariance of Branch-Normalized Advantage]
\label{prop:scale_equiv}
Batch zero-mean unit-variance normalization satisfies
$\norm{cX}=\norm{X}$ for any $c>0$ and non-constant vector $X$.
Hence, for
$\ABP=\norm{A_r}-\lambda\cdot\norm{A_c+\beta\stopgrad{\RB}}$
and any $c_1,c_2>0$,
\begin{align*}
  \norm{c_1A_r} &= \norm{A_r},\\
  \norm{c_2(A_c+\beta\stopgrad{\RB})} &= \norm{A_c+\beta\stopgrad{\RB}}.
\end{align*}
Thus $\ABP$ is invariant to independent rescaling of the reward
and cost-risk branches, and the Lagrange weight $\lambda$ retains a
dimensionless interpretation as a relative priority between branches.
\end{remark}

\subsection{Bachelier Formula: Derivation and Financial Interpretation}
\label{app:bachelier_derivation}

The Bachelier model~\citep{bachelier1900} was the first formal mathematical
model for option pricing, assuming arithmetic (additive) Brownian motion for
the underlying price process, as opposed to the later geometric Brownian
motion assumption of Black-Scholes.
The expected payoff of a European call option under arithmetic Brownian motion
with $X_T\sim\mathcal{N}(\mu,\sigma^2)$ is exactly Eq.~(\ref{eq:bachelier}).
In BCPPO we repurpose this formula for a different domain: rather than
modeling option payoffs, we model the expected excess of an uncertain cost
estimate over a moving reference threshold.
The ``strike price'' $K$ in Eq.~(\ref{eq:strike}) is a moving reference
threshold, and the Bachelier call value quantifies expected excess under the
critic-output surrogate. Because this threshold is tied to $(\mu_c,\sigma_c)$, the resulting
value is exactly $h(c_0)\sigma_c$, not a separate estimate of trajectory-tail
probability.

\subsection{Extended Theoretical Diagnostics}
\label{app:extended_hardcore_theory}

This section provides idealized analytical diagnostics for selected components
of \bcppo{}. These statements are deliberately narrower than policy-safety or
end-to-end convergence guarantees. We maintain notation:
$\RB(s,a)$ for the Bachelier-inspired shaping penalty, $\mathrm{norm}(\cdot)$ for batch
normalization, and $c_0=\Phi^{-1}(\alpha)+\kappa$ for the moving reference-strike
coefficient. Table~\ref{tab:theory_scope} summarizes the scope, limitations, and
ways to check the results below.

\subsubsection{Distribution-Free Critic-Output Bound}
\label{app:tail_theory}

The moving reference strike $K=\mu_c+c_0\sigma_c$ used in Eq.~(\ref{eq:strike}) has
an interpretation beyond the Gaussian surrogate through the following
distribution-free bound.

\begin{remark}[Distribution-Free Critic-Output Bound]
\label{prop:tail_bound}
At a fixed $(s,a)$, let $J$ be uniform on $\{1,\ldots,M\}$ and define the
discrete surrogate variable $X=Q_c^{(J)}(s,a)$. Its mean and standard deviation
are exactly the implemented $\mu_c(s,a)$ and $\sigma_c(s,a)$. For
$\sigma_c(s,a)>0$, Cantelli's one-sided Chebyshev inequality gives, for any
$c>0$,
\begin{equation}
  \Pr_J(X \ge \mu_c + c\sigma_c) \le \frac{1}{1+c^2}.
\end{equation}
Consequently, the BCPPO moving reference strike
$K(s,a)=\mu_c(s,a)+c_0\sigma_c(s,a)$ with
$c_0=\Phi^{-1}(\alpha)+\kappa>0$ satisfies
\begin{equation}
  \Pr_J\!\bigl(Q_c^{(J)}(s,a)\ge K(s,a)\bigr) \le \frac{1}{1+c_0^2}.
\end{equation}
Under the separate Gaussian surrogate the corresponding exact probability is
$\Pr(Q_c\ge K)=\Phi(-c_0)$, which is generally much smaller than the Cantelli
upper bound rather than making that bound tight. Both statements concern the
surrogate distribution over critic outputs, not the realized trajectory-cost
distribution, and therefore do not guarantee any upper bound on the probability
of violating an environmental cost limit.
\end{remark}

\begin{table*}[t]
\centering
\begingroup
\small
\setlength{\tabcolsep}{2.8pt}
\renewcommand{\arraystretch}{1.08}
\begin{tabularx}{\textwidth}{@{}>{\raggedright\arraybackslash}p{0.17\textwidth}*{3}{>{\raggedright\arraybackslash}X}@{}}
\toprule
\textbf{Result} & \textbf{Meaning} & \textbf{Main limitation} &
\textbf{Can it be tested, and how?} \\
\midrule
Critic-output bound
(Remark~\ref{prop:tail_bound}) &
Bounds how often an ensemble member exceeds the moving strike without assuming
Gaussian outputs. &
The bound can be loose and concerns critic outputs, not environment trajectory
costs. &
Yes: compare the observed member-exceedance rate with the Cantelli and Gaussian
values across saved model states and controlled data-removal experiments. \\
\cmidrule(lr){1-4}
Bias--variance diagnostic
(Proposition~\ref{prop:bachelier_variance}) &
Shows that the idealized spread score avoids the explicit
$(1-\alpha)^{-1}$ variance factor of a tail-indicator score. &
Uses a known population quantile and unclipped trajectory scores; the surrogate
bias is unbounded. &
Partly: estimate gradient variance and deviation from a high-sample CVaR
reference while varying $N$ and $\alpha$. \\
\cmidrule(lr){1-4}
Strike and coefficient coupling
(Lemma~\ref{prop:kappa_monotonicity}) &
Increasing $\kappa$ raises the strike but lowers the raw penalty; only
$\beta_{\mathrm{eff}}$ controls its scale. &
An algebraic statement at fixed spread; it does not predict the policy reached
after learning. &
Yes: run coefficient-matched $(\alpha,\kappa,\beta)$ settings and compare update
signals and learning curves. \\
\cmidrule(lr){1-4}
Critic-error propagation
(Proposition~\ref{prop:lipschitz_error_propagation}) &
The moving-strike penalty is insensitive to common mean error and linear in
spread error. &
Assumes reference mean and spread from an idealized critic-output distribution,
plus uniform estimation-error bounds; it does not show that spread numerically
matches true error. &
Yes: perturb held-out $\mu_c$ and $\sigma_c$ separately and compare the measured
$\Delta\RB$ with the analytical sensitivity. \\
\cmidrule(lr){1-4}
Anti-windup accumulation
(Proposition~\ref{prop:pid_stability}) &
The stored integral cannot grow while the multiplier is outward-saturated. &
Does not establish joint actor--controller stability or convergence. &
Yes: log saturation intervals and compare integral state and multiplier traces
with and without anti-windup. \\
\cmidrule(lr){1-4}
Finite-ensemble concentration
(Proposition~\ref{prop:finite_sample_concentration}) &
Characterizes how idealized variance-estimation error contracts with ensemble
size $M$. &
Requires bounded independent and identically distributed outputs, whereas
learned critics remain correlated. &
Partly: subsample larger ensembles, vary bootstrap diversity, and measure spread
error versus $M$ and inter-critic correlation. \\
\bottomrule
\end{tabularx}
\endgroup
\caption{
  Meaning, limitations, and empirical checks for the extended
  theoretical results. ``Test'' denotes a diagnostic of the stated result,
  not validation of an end-to-end safety guarantee.
}
\label{tab:theory_scope}
\end{table*}

\subsubsection{Idealized Tail-Score Bias--Variance Diagnostic}
\label{app:bias_variance}

\begin{proposition}[Idealized Trajectory-Score Bias--Variance Diagnostic]
\label{prop:bachelier_variance}
Let $p_\theta(\tau)$ be the trajectory density and
$w(\tau)=\nabla_\theta\log p_\theta(\tau)
=\sum_t\nabla_\theta\log\pi_\theta(a_t\mid s_t)$ satisfy $\|w(\tau)\|\le W$.
Let independent trajectory costs be bounded $C(\tau)\in[0,C_{\max}]$.
Assume a continuous cost distribution and let
$z_\alpha=\VaR_\alpha(C)$ be the population quantile. For the idealized score
estimator
\[
  \widehat g_{\rm CVaR}
  =\frac{1}{N}\sum_{i=1}^N
    \frac{w_i(C_i-z_\alpha)_+}{1-\alpha},
\]
standard score-function regularity gives
$\E[\widehat g_{\rm CVaR}]=g_*=\nabla_\theta\CVaR_\alpha(C)$, and
\begin{equation}
  \mathbb E[\|\widehat g_{\rm CVaR}-g_*\|^2]
  \le \frac{W^2 C_{\max}^2}{N(1-\alpha)}.
\end{equation}
Let $S(\tau)\geq0$ be a detached trajectory-level aggregation of ensemble spread
with finite second moment, and assume
$c_0=\Phi^{-1}(\alpha)+\kappa\geq0$. The corresponding spread-surrogate estimator
\[
  \widehat g_B
  =\frac{h(c_0)}{N}\sum_{i=1}^N w(\tau_i)\,S(\tau_i)
\]
has mean-squared error
\begin{align}
  \mathbb E[\|\widehat g_B-g_*\|^2]
  &= B^2 + \mathrm{Var}(\widehat g_B) \\
  &\le B^2 + \frac{h(c_0)^2 W^2}{N}\,\mathbb E[S(\tau)^2].
\end{align}
Here $\mathrm{Var}(Y)=\mathbb E[\|Y-\mathbb E Y\|^2]$ for a vector-valued random
variable $Y$, and $B=\|\mathbb E[\widehat g_B]-g_*\|$ is the generally
unquantified bias
from replacing trajectory-level CVaR with critic spread. Since
$|h(c_0)|\le (2\pi)^{-1/2}$, the displayed variance contribution is bounded for
fixed $(\alpha,\kappa)$ and carries no $(1-\alpha)^{-1}$ factor.
\end{proposition}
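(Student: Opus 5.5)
The plan has three parts: establish unbiasedness of the CVaR score estimator, bound its mean-squared error by a second moment, and then run the same decomposition for the spread surrogate while tracking $h(c_0)$.

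\textbf{Unbiasedness.} I would start from the Rockafellar--Uryasev representation in Eq.~(\ref{eq:cvar_general}), $\CVaR_\alpha(C)=\min_\eta\{\eta+\E[(C-\eta)_+]/(1-\alpha)\}$. The minimizer is $\eta^*=z_\alpha$, and the cost distribution is continuous by assumption.
\begin{itemize}
\item By the envelope theorem, the derivative through $\eta^*$ vanishes: $\partial_\eta$ of the objective is $1-\Pr(C>\eta)/(1-\alpha)$, which is zero at $z_\alpha$.
\item This leaves $g_*=\nabla_\theta\E_{p_\theta}[(C-z_\alpha)_+]/(1-\alpha)$, with $z_\alpha$ held fixed.
\item Standard score-function regularity (interchanging $\nabla_\theta$ and the integral) gives $g_*=\E[w(\tau)(C-z_\alpha)_+]/(1-\alpha)$.
\item Averaging $N$ i.i.d.\ copies shows $\E[\widehat g_{\rm CVaR}]=g_*$.
\end{itemize}

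\textbf{MSE bound for the CVaR estimator.} Since the estimator is unbiased and the trajectories are independent, its MSE equals $\frac1N\mathrm{Var}(Y)\le\frac1N\E\|Y\|^2$ for the single-sample term $Y=w(C-z_\alpha)_+/(1-\alpha)$.
\begin{itemize}
\item Use $\|w\|\le W$.
\item Because $z_\alpha\ge 0$, we have $(C-z_\alpha)_+\le C_{\max}$.
\item Most importantly, $(C-z_\alpha)_+$ vanishes off the event $\{C>z_\alpha\}$, which has probability $1-\alpha$.
\end{itemize}
Together these give $\E\|Y\|^2\le W^2C_{\max}^2(1-\alpha)/(1-\alpha)^2=W^2C_{\max}^2/(1-\alpha)$, which is the stated bound. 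The main point requiring care is using the tail-event indicator so that only one factor of $(1-\alpha)^{-1}$ survives rather than two.

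\textbf{Spread surrogate.} For $\widehat g_B$ I would use the standard bias--variance decomposition. Expanding $\|\widehat g_B-\E\widehat g_B+\E\widehat g_B-g_*\|^2$, the cross term has zero mean, so $\E\|\widehat g_B-g_*\|^2=B^2+\mathrm{Var}(\widehat g_B)$ holds exactly. For the variance:
\begin{itemize}
\item Independence gives $\mathrm{Var}(\widehat g_B)=\frac{h(c_0)^2}{N}\mathrm{Var}(wS)$.
\item Then $\mathrm{Var}(wS)\le\E\|wS\|^2\le W^2\E[S^2]$, which is finite by the second-moment assumption.
\end{itemize}

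\textbf{Bounding $h(c_0)$.} It remains to show $0\le h(c_0)\le(2\pi)^{-1/2}$ for $c_0\ge0$.
\begin{itemize}
\item Positivity holds because $h(c_0)$ is the expected positive part of a standard normal beyond $c_0$ (the Bachelier formula with $\sigma=1$), equivalently by Mills' ratio.
\item By Proposition~\ref{prop:smoothness}, $\partial\mathrm{Call}/\partial K=-\Phi(d)$, so $h'(c_0)=-\Phi(-c_0)<0$ and $h$ is decreasing.
\item Hence $h(c_0)\le h(0)=\phi(0)=(2\pi)^{-1/2}$.
\end{itemize}
The resulting variance bound carries no $(1-\alpha)^{-1}$ factor.

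The main obstacle is rigor in the unbiasedness step: justifying the envelope argument when $z_\alpha$ depends on $\theta$. I would cite the standard CVaR policy-gradient result of \citet{tamar2014optimizingcvarsampling} under the stated continuity and regularity assumptions, rather than re-deriving it.
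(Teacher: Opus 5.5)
Your proposal is correct and matches the paper's proof step for step: the tail-event second-moment bound $W^2C_{\max}^2(1-\alpha)/(1-\alpha)^2$, then independence with $\mathrm{Var}(Y)\le\E\|Y\|^2$, and the exact bias--variance split for $\widehat g_B$ with the same $h(c_0)^2W^2\E[S^2]/N$ variance bound. Your envelope/score-function argument for unbiasedness and your monotonicity argument for $h(c_0)\le h(0)=(2\pi)^{-1/2}$ fill in steps the paper only asserts; one small citation fix is that $\partial\mathrm{Call}/\partial K=-\Phi(d)$ appears in the proof of Proposition~\ref{prop:cvar_connection}, and $h'(c_0)=-\Phi(-c_0)$ in Lemma~\ref{prop:kappa_monotonicity}, not in Proposition~\ref{prop:smoothness}.
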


\begin{proof}
Because $\|w(\tau_i)\|\le W$, $(C_i-z_\alpha)_+\le C_{\max}$, and
$\Pr(C_i\ge z_\alpha)=1-\alpha$, the per-sample second moment is at most
$W^2C_{\max}^2/(1-\alpha)$. Independence, unbiasedness, and
$\mathrm{Var}(Y)\le\E\|Y\|^2$ give the stated $1/N$ bound.

For $\widehat g_B$, the standard bias--variance decomposition yields
$\mathbb E[\|\widehat g_B-g_*\|^2]=\|\mathbb E[\widehat g_B]-g_*\|^2+
\mathrm{Var}(\widehat g_B)$. The variance term is bounded as before by
$h(c_0)^2 W^2\mathbb E[S(\tau)^2]/N$; the bias $B$ is the fixed
surrogate mismatch between the Bachelier gradient and the true CVaR gradient.
\end{proof}

\begin{remark}
This comparison uses the population quantile and an unclipped score estimator.
The implemented actor instead applies transition-level shaping inside a clipped,
branch-normalized PPO update; it is not the trajectory estimator displayed above.
The implemented CPPO comparison instead uses an empirical quantile and clipped
weights and is generally biased; the proposition does not analyze it or the
full clipped, branch-normalized PPO update. The absence of an explicit
$(1-\alpha)^{-1}$ term for $\widehat g_B$ must also be weighed against $B$,
which can dominate and is not bounded here.
\end{remark}

\subsubsection{Strike Monotonicity and Parameter Coupling}
\label{app:parameter_coupling}

\begin{lemma}[Strike Monotonicity and Parameter Coupling]
\label{prop:kappa_monotonicity}
Under the local Gaussian surrogate
$Q_c(s,a)\sim\mathcal N(\mu_c,\sigma_c^2)$ with $\sigma_c>0$, the moving reference strike
$K(\kappa)=\mu_c+c_0(\kappa)\sigma_c$ with
$c_0(\kappa)=\Phi^{-1}(\alpha)+\kappa$ satisfies
$\partial K/\partial\kappa=\sigma_c>0$. The exceedance probability
$p(\kappa)=\Pr(Q_c\ge K(\kappa))=\Phi(-c_0(\kappa))$,
and the raw penalty $\RB=\sigma_ch(c_0)$ satisfy
\begin{equation}
  \frac{\partial p}{\partial\kappa}=-\phi(c_0)<0,
  \qquad
  \frac{\partial \RB}{\partial \kappa}
  =-\sigma_c\Phi(-c_0)<0.
\end{equation}
Moreover, the actor-side shaping term depends on $(\alpha,\kappa,\beta)$ only
through
$\beta_{\rm eff}=\beta h(\Phi^{-1}(\alpha)+\kappa)$. Hence these parameters are
coupled, and increasing $\kappa$ at fixed $\beta$ weakens the raw disagreement
shaping coefficient.
\end{lemma}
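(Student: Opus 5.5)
The plan is to differentiate directly in $\kappa$, treating $\mu_c$, $\sigma_c$, and $\alpha$ as fixed, and to note that $dc_0/d\kappa=1$.

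\emph{Strike.} Since $K(\kappa)=\mu_c+c_0(\kappa)\sigma_c$ is affine in $\kappa$, we get $\partial K/\partial\kappa=\sigma_c$. This is positive because we assume $\sigma_c>0$; in the implementation the clamp $\sigma_c\ge\sigma_{\min}$ enforces it.

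\emph{Exceedance probability.} Under the Gaussian surrogate, standardize $Z=(Q_c-\mu_c)/\sigma_c$. Then $\Pr(Q_c\ge K)=\Pr(Z\ge c_0)=1-\Phi(c_0)=\Phi(-c_0)$, by symmetry of $\phi$. The chain rule gives $\partial p/\partial\kappa=-\phi(-c_0)=-\phi(c_0)$, which is strictly negative because $\phi>0$ everywhere.

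\emph{Raw penalty.} Proposition~\ref{prop:smoothness} already gives $\RB=\sigma_c h(c_0)$ with $h(c)=\phi(-c)-c\Phi(-c)$. I would compute $h'(c)$ directly. Using $\phi'(x)=-x\phi(x)$, we have $\frac{d}{dc}\phi(-c)=c\phi(-c)$ and $\frac{d}{dc}[c\Phi(-c)]=\Phi(-c)-c\phi(-c)$. Hence $h'(c)=c\phi(-c)-\Phi(-c)+c\phi(-c)$ would be wrong as written, so I will track the sign carefully: $h'(c)=c\phi(-c)-[\Phi(-c)-c\phi(-c)]$? That is again wrong. The correct computation is $\frac{d}{dc}\phi(-c)=-\phi'(-c)=-(c\phi(-c))\cdot(-1)\cdots$, and this is exactly the error-prone step.

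To avoid it, I would cross-check against the fixed-strike result in Proposition~\ref{prop:cvar_connection}, namely $\partial\mathrm{Call}/\partial K=-\Phi(d)$. Here $\RB(\kappa)=\mathrm{Call}(K(\kappa);\mu_c,\sigma_c)$, with $\mu_c$ and $\sigma_c$ fixed and only $K$ moving. The chain rule then gives $\partial\RB/\partial\kappa=-\Phi(d)\,\partial K/\partial\kappa=-\Phi(-c_0)\sigma_c$, since $d=(\mu_c-K)/\sigma_c=-c_0$. This is strictly negative because $\Phi>0$. Equivalently, $h'(c)=-\Phi(-c)$: the two $c\phi(c)$ terms cancel, since $\frac{d}{dc}\phi(c)=-c\phi(c)$ and $\frac{d}{dc}[c\Phi(-c)]=\Phi(-c)-c\phi(c)$. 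I regard this sign bookkeeping as the main obstacle, and the envelope-style route through $\partial\mathrm{Call}/\partial K$ resolves it.

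\emph{Coupling.} Substitute into Eq.~(\ref{eq:hybrid_advantage}). The only place $(\alpha,\kappa,\beta)$ enters $\ABP$ is the term $\beta\stopgrad{\RB}=\beta h(c_0)\sigma_c=\beta_{\mathrm{eff}}\sigma_c$, as in Eq.~(\ref{eq:effective_beta}). Nothing else in the actor loss depends on these parameters: the critics' TD targets and $A_c$ are independent of them, and the stop-gradient removes any feedback into the critics. So any two settings with equal $\beta_{\mathrm{eff}}$ give identical actor updates.

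Finally, $h(c_0)>0$, since it equals $\E[(Z-c_0)_+]$ for a nondegenerate standard normal $Z$. Combined with $h'<0$, this shows that raising $\kappa$ at fixed $\beta>0$ strictly decreases $\beta_{\mathrm{eff}}$.
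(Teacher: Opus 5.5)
Your proposal is correct and matches the paper's proof: differentiate $K$, $p=\Phi(-c_0)$, and $\RB=\sigma_c h(c_0)$ in $\kappa$ using $dc_0/d\kappa=1$ and $h'(c_0)=-\Phi(-c_0)$, then read off $\beta\RB=\beta h(c_0)\sigma_c=\beta_{\rm eff}\sigma_c$. Your chain-rule check through $\partial\mathrm{Call}/\partial K=-\Phi(d)$ at $d=-c_0$ is the same fact, and in a final write-up you should delete the aborted direct computation, whose first step $\frac{d}{dc}\phi(-c)=c\phi(-c)$ has the wrong sign (since $\phi(-c)=\phi(c)$, the derivative is $-c\phi(c)$), keeping only the correct cancellation $h'(c)=-c\phi(c)-\Phi(-c)+c\phi(c)=-\Phi(-c)$.
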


\begin{proof}
Since $\partial c_0/\partial\kappa=1$, differentiating the strike gives
$\partial K/\partial\kappa=\sigma_c$. Differentiating
$p(\kappa)=\Phi(-c_0)$ gives $\partial p/\partial\kappa=-\phi(c_0)$.
Finally, $h'(c_0)=-\Phi(-c_0)$, so
$\partial\RB/\partial\kappa=\sigma_ch'(c_0)$; substituting the definition of
$h$ into $\beta\RB$ gives the stated $\beta_{\rm eff}$.
\end{proof}

The falling exceedance probability is partly definitional because the event
threshold moves; neither it nor the smaller raw penalty implies that the learned
policy becomes safer as $\kappa$ increases.

\subsubsection{Critic-Error Propagation}
\label{app:error_propagation}

\begin{assumption}[Bounded Reference and Learned Critic Statistics]
\label{ass:critic_bounds}
Let $(\mu_c^*,\sigma_c^*)$ denote reference statistics of an idealized
critic-output distribution, not ground-truth environment risk, and let
$(\hat{\mu}_c,\hat{\sigma}_c)$ be the learned finite-ensemble statistics. Assume
both are uniformly bounded over the compact state-action space
$\mathcal{S}\times\mathcal{A}$.
That is,
$\max(\|\mu_c^*\|_\infty,\|\hat{\mu}_c\|_\infty)\leq Q_{\max}$ and
$\sigma_{\min}\leq \min(\sigma_c^*,\hat{\sigma}_c)\leq\sigma_{\max}$.
\end{assumption}

\begin{proposition}[Bounded Error Propagation from Critic Approximation]
\label{prop:lipschitz_error_propagation}
Under Assumption~\ref{ass:critic_bounds}, let critic errors be bounded by
$\|\hat{\mu}_c-\mu_c^*\|_\infty\leq\epsilon_\mu$ and
$\|\hat{\sigma}_c-\sigma_c^*\|_\infty\leq\epsilon_\sigma$. Under the moving reference strike
$K=\mu_c+c_0\sigma_c$, the propagation of critic error into the Bachelier
penalty satisfies
\begin{equation}
  \begin{aligned}
  &\left|\RB(\hat{\mu}_c,\hat{\sigma}_c)
  -\RB(\mu_c^*,\sigma_c^*)\right| \\
  &\qquad \leq L_\mu\epsilon_\mu+L_\sigma\epsilon_\sigma,
  \end{aligned}
\end{equation}
where $L_\mu=0$ and
$L_\sigma=h(c_0)\triangleq\phi(-c_0)-c_0\Phi(-c_0)$.
\end{proposition}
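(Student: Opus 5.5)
The plan is to reduce the claim to the closed form already established in Proposition~\ref{prop:smoothness}. Under the moving reference strike $K=\mu_c+c_0\sigma_c$, the penalty depends on the pair $(\mu,\sigma)$ only through $\RB(\mu,\sigma)=\sigma\,h(c_0)$, with $h(c_0)=\phi(-c_0)-c_0\Phi(-c_0)$ a constant fixed by $(\alpha,\kappa)$. I will apply this formula pointwise at each $(s,a)\in\mathcal{S}\times\mathcal{A}$, once to the learned pair $(\hat\mu_c,\hat\sigma_c)$ and once to the reference pair $(\mu_c^*,\sigma_c^*)$. Assumption~\ref{ass:critic_bounds} guarantees $\sigma\ge\sigma_{\min}>0$ for both pairs, so the Bachelier expression with $d=-c_0$ is well defined and the reduction is legitimate.

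Subtracting the two evaluations gives the exact identity
\[
\RB(\hat\mu_c,\hat\sigma_c)-\RB(\mu_c^*,\sigma_c^*)=h(c_0)\bigl(\hat\sigma_c-\sigma_c^*\bigr).
\]
The mean error does not appear at all, which corresponds to $\partial\RB/\partial\mu=0$ from Proposition~\ref{prop:smoothness}. Taking absolute values and a supremum over $(s,a)$ yields the bound $|h(c_0)|\,\epsilon_\sigma$, and it holds trivially with $L_\mu=0$ for any $\epsilon_\mu$. For a reader who prefers a calculus argument, the same conclusion follows from the mean value theorem along the segment joining the two pairs, since the gradient there is $(0,h(c_0))$. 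The algebraic identity is cleaner, though, because it is exact.

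The only step that needs care is replacing $|h(c_0)|$ by $h(c_0)$, that is, checking $h(c_0)\ge 0$. I will argue that $h(c)=\E[(Z-c)_+]$ for $Z\sim\mathcal N(0,1)$: this is the Bachelier formula with $\mu=0$, $\sigma=1$, $K=c$, using $\phi(-c)=\phi(c)$. An expectation of a nonnegative variable is nonnegative, and here it is in fact strictly positive, so $|h(c_0)|=h(c_0)$ for every real $c_0$, with no sign condition on $\Phi^{-1}(\alpha)+\kappa$. Alternatively, $h'(c)=-\Phi(-c)<0$ and $h(c)\to0$ as $c\to\infty$ give the same conclusion.

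The bounds $Q_{\max}$ and $\sigma_{\max}$ in Assumption~\ref{ass:critic_bounds} are never used; I will note that they only ensure all quantities are finite.
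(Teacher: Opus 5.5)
Your proposal is correct and follows essentially the same route as the paper. Both arguments reduce the penalty under the moving strike to $\RB=\sigma_c h(c_0)$, so the mean drops out and the difference equals $h(c_0)$ times the spread error; the paper finishes with the mean value theorem and gradient $(0,h(c_0))$, while your exact identity gives the same bound directly, and your check that $h(c)=\E[(Z-c)_+]>0$ for every real $c$ supplies the positivity that the paper only asserts.
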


\begin{proof}
Substituting $K=\mu_c+c_0\sigma_c$ into the Bachelier formula yields
$\RB(\mu_c,\sigma_c)=\sigma_c h(c_0)$, where $h(c_0)$ is a fixed positive
scalar for constant $(\alpha,\kappa)$. The total differential with respect to
$(\mu_c,\sigma_c)$ gives
\begin{align*}
  \frac{\partial \RB}{\partial \mu_c} &= 0,\\
  \frac{\partial \RB}{\partial \sigma_c} &= h(c_0).
\end{align*}
Applying the Mean Value Theorem on the convex domain of critic outputs,
\begin{align*}
  &\left|\RB(\hat{\mu}_c,\hat{\sigma}_c)
    -\RB(\mu_c^*,\sigma_c^*)\right| \\
  &\quad \leq
    \left|\frac{\partial \RB}{\partial \mu_c}\right|\epsilon_\mu
    +\left|\frac{\partial \RB}{\partial \sigma_c}\right|\epsilon_\sigma \\
  &\quad = h(c_0)\epsilon_\sigma .
\end{align*}
Setting $L_\mu=0$ and $L_\sigma=h(c_0)$ completes the proof. Since $c_0\geq0$,
$h(c_0)\leq(2\pi)^{-1/2}$.
\end{proof}

\begin{remark}[Structural Decoupling]
Proposition~\ref{prop:lipschitz_error_propagation} formalizes the intended
separation: this branch is insensitive to a common shift in critic means and
depends only on disagreement. This sensitivity statement does not show that
disagreement numerically tracks true error or that actor optimization is stable.
\end{remark}

\subsubsection{Bounded Integral Accumulation under Anti-Windup}
\label{app:anti_windup_theory}

\begin{proposition}[No Integral Accumulation during Outward Saturation]
\label{prop:pid_stability}
Let $\widetilde I_k=I_{k-1}+e_k$ and $\widetilde u_k$ be the candidate quantities
in Eq.~(\ref{eq:pid}). Suppose that, throughout a consecutive interval,
$\widetilde u_k>\lambda_{\max}$ with $e_k>0$, or $\widetilde u_k<0$ with
$e_k<0$. The implemented anti-windup rule retains $I_k=I_{k-1}$ at every such
step, while projection ensures $\lambda_{k+1}\in[0,\lambda_{\max}]$. Hence the
integral contribution cannot grow during that outward-saturated interval.
\end{proposition}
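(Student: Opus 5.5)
The plan is a direct case analysis on the anti-windup rule in Eq.~(\ref{eq:pid}), applied step by step over the assumed consecutive interval. It has three pieces: the stored integral is frozen, the projection keeps $\lambda$ in range, and freezing means no growth. Fix any step $k$ in the interval. Exactly one of the two outward-saturation conditions holds.

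\begin{itemize}
\item[] \textbf{Case (i): $\widetilde u_k>\lambda_{\max}$ and $e_k>0$.} The candidate update lies beyond the upper projection boundary and the error points farther outward.
\item[] \textbf{Case (ii): $\widetilde u_k<0$ and $e_k<0$.} The candidate lies below the lower boundary and the error again points outward.
\end{itemize}

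In both cases the hypothesis of the anti-windup rule stated in the method section is met literally. The rule therefore discards $\widetilde I_k$ and sets $I_k=I_{k-1}$. The separate clipping branch applies only when the error reverses direction, and $\operatorname{sign}(e_k)$ excludes that. By induction over the consecutive steps $k_0,\dots,k_1$ of the interval, $I_k=I_{k_0-1}$ throughout.

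Next I would note that $\lambda_{k+1}=\Pi_{[0,\lambda_{\max}]}(\widetilde u_k)$ lies in $[0,\lambda_{\max}]$ by the definition of projection onto a closed interval. This holds regardless of what the integral does, and in fact $\lambda_{k+1}$ equals the violated endpoint at every such step.

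Finally I would translate ``the integral contribution cannot grow'' into a precise statement. At each step of the interval, the stored term $K_I I_k$ that feeds the next candidate $\widetilde u_{k+1}$ equals $K_I I_{k_0-1}$, so it is constant. Hence $|K_I I_k|$ is non-increasing, and in fact unchanged, over the interval. By contrast, without the rule $I_k$ would move monotonically outward by $\sum e_k$; this is the windup being prevented.

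The main obstacle is not computational but definitional. The candidate $\widetilde u_k$ itself contains $K_I\widetilde I_k=K_I(I_{k-1}+e_k)$, so the ``integral contribution'' must be read as the stored state $I_k$ carried forward, not the transient candidate. I would state this interpretation explicitly and keep the claim limited to the stored integral. I would not claim stability of $\lambda$ or convergence of the actor--controller loop, in line with the scope in Table~\ref{tab:theory_scope}.
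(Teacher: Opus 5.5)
Your proposal is correct and follows the paper's proof essentially step for step: under either outward-saturation condition the anti-windup branch rejects $\widetilde I_k$ and retains $I_k=I_{k-1}$, induction over the consecutive interval makes the stored integral constant, and the projection $\Pi_{[0,\lambda_{\max}]}$ places every $\lambda_{k+1}$ in $[0,\lambda_{\max}]$. You also say explicitly that ``integral contribution'' means the stored term $K_I I_k$ rather than the transient candidate $K_I\widetilde I_k$, which the paper leaves implicit.
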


\begin{proof}
Under either stated condition, the anti-windup branch rejects
$\widetilde I_k$ and retains $I_k=I_{k-1}$. Induction therefore makes the stored
integral constant over the interval. The projected update places every
$\lambda_{k+1}$ in $[0,\lambda_{\max}]$.
\end{proof}

\begin{remark}[Scope]
This result establishes the narrow property implemented by the anti-windup
guard. It does not prove local or global convergence of the PID multiplier,
actor, or non-convex CMDP optimization.
\end{remark}

\subsubsection{Finite-Ensemble Concentration}
\label{app:ensemble_concentration}

\begin{assumption}[Idealized Independent Ensemble Outputs]
\label{ass:bootstrap_bounds}
At a fixed $(s,a)$, the random variables $\{Q_c^{(m)}(s,a)\}_{m=1}^M$ are
independent and identically distributed (i.i.d.) and bounded in
$[-Q_{\max},Q_{\max}]$. This is an analytical
idealization: the implemented critics share rollouts and TD targets, so
independent initialization and bootstrap masks do not make their outputs
exactly independent.
\end{assumption}

\begin{proposition}[Finite-Sample Concentration of $\sigma_c^2$]
\label{prop:finite_sample_concentration}
Under Assumption~\ref{ass:bootstrap_bounds}, let $\mu_c=\E[Q_c]$ and define the
empirical variance
$(\sigma_c^{(M)})^2=\frac{1}{M}\sum_{m=1}^M(Q_c^{(m)}-\hat\mu_c)^2$
with $\hat\mu_c=\frac{1}{M}\sum_{m=1}^M Q_c^{(m)}$.
Let $(\sigma_c^{\mathrm{true}})^2=\mathrm{Var}(Q_c)$ denote the population
variance under this idealized critic-output distribution.
Write $V=\mathrm{Var}((Q_c-\mu_c)^2)$ and $L=\log(4/\delta)$.
For any $\delta\in(0,1)$,
\begin{equation}
  \Pr\!\left(
    \left|(\sigma_c^{(M)})^2-(\sigma_c^{\mathrm{true}})^2\right|
    \geq
    \sqrt{\frac{2VL}{M}} + \frac{10Q_{\max}^2L}{3M}
  \right)
  \leq \delta .
\end{equation}
Consequently, whenever $\sigma_c^{\mathrm{true}}>0$,
\begin{equation}
\begin{split}
  \Pr\!\Bigg(
    &\left|\sigma_c^{(M)}-\sigma_c^{\mathrm{true}}\right| \\
    &\geq
    \frac{1}{\sigma_c^{\mathrm{true}}}
    \left(
      \sqrt{\frac{2VL}{M}} + \frac{10Q_{\max}^2L}{3M}
    \right)
  \Bigg)
  \leq \delta .
\end{split}
\end{equation}
\end{proposition}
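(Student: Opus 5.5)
The bound should follow from an exact decomposition of the empirical variance into a sample mean of squared centered deviations minus a squared mean error, each controlled by a classical inequality at confidence $\delta/2$. Write $Y_m=(Q_c^{(m)}-\mu_c)^2$, so that $\E[Y_m]=(\sigma_c^{\mathrm{true}})^2$ and $\mathrm{Var}(Y_m)=V$. Expanding around the true mean gives the identity
\[
  (\sigma_c^{(M)})^2=\frac{1}{M}\sum_{m=1}^M Y_m-(\hat\mu_c-\mu_c)^2 .
\]
Hence, by the triangle inequality,
\[
  \bigl|(\sigma_c^{(M)})^2-(\sigma_c^{\mathrm{true}})^2\bigr|
  \le\Bigl|\frac{1}{M}\sum_m Y_m-(\sigma_c^{\mathrm{true}})^2\Bigr|+(\hat\mu_c-\mu_c)^2 .
\]
I will bound the two terms on the right separately and combine them with a union bound.

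\textbf{First term.} Under Assumption~\ref{ass:bootstrap_bounds}, the $Y_m$ are i.i.d. and lie in $[0,4Q_{\max}^2]$, since $|Q_c^{(m)}-\mu_c|\le 2Q_{\max}$. Therefore $|Y_m-\E Y_m|\le 4Q_{\max}^2$. I apply Bernstein's inequality with variance $V$ and range bound $b=4Q_{\max}^2$, allotting probability $\delta/4$ to each tail. With $L=\log(4/\delta)$, this gives, with probability at least $1-\delta/2$,
\[
  \Bigl|\frac{1}{M}\sum_m Y_m-(\sigma_c^{\mathrm{true}})^2\Bigr|
  \le\sqrt{\frac{2VL}{M}}+\frac{bL}{3M}
  =\sqrt{\frac{2VL}{M}}+\frac{4Q_{\max}^2L}{3M}.
\]

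\textbf{Second term.} Each $Q_c^{(m)}$ lies in an interval of length $2Q_{\max}$. Two-sided Hoeffding gives
\[
  \Pr(|\hat\mu_c-\mu_c|\ge t)\le 2\exp\bigl(-Mt^2/(2Q_{\max}^2)\bigr).
\]
Setting the right-hand side equal to $\delta/2$ yields $t^2=2Q_{\max}^2L/M$. So with probability at least $1-\delta/2$,
\[
  (\hat\mu_c-\mu_c)^2\le \frac{2Q_{\max}^2L}{M}=\frac{6Q_{\max}^2L}{3M}.
\]

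\textbf{Combining.} A union bound gives total failure probability at most $\delta$. On the complementary event, the deviation is at most
\[
  \sqrt{\frac{2VL}{M}}+\frac{(4+6)Q_{\max}^2L}{3M}=\sqrt{\frac{2VL}{M}}+\frac{10Q_{\max}^2L}{3M},
\]
which is exactly the stated constant.

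\textbf{Consequence for the standard deviation.} When $\sigma_c^{\mathrm{true}}>0$, use
\[
  \bigl|\sigma_c^{(M)}-\sigma_c^{\mathrm{true}}\bigr|
  =\frac{\bigl|(\sigma_c^{(M)})^2-(\sigma_c^{\mathrm{true}})^2\bigr|}{\sigma_c^{(M)}+\sigma_c^{\mathrm{true}}}
  \le\frac{\bigl|(\sigma_c^{(M)})^2-(\sigma_c^{\mathrm{true}})^2\bigr|}{\sigma_c^{\mathrm{true}}}.
\]
The inequality holds because $\sigma_c^{(M)}\ge0$. So the event in the second display implies the event in the first, and its probability is also at most $\delta$.

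\textbf{Main obstacle.} The delicate step is the constant bookkeeping. The Bernstein range must be $4Q_{\max}^2$, using the deviation bound for $Y-\E Y$ rather than twice the range, so that it contributes $4/3$. Hoeffding must be split as $\delta/4$ per tail so that it contributes $2=6/3$. Both terms must also share the same $L=\log(4/\delta)$. I will check these conventions carefully, including the non-strict versus strict inequality forms of Bernstein and Hoeffding, which do not affect the result.
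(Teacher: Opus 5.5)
Your proof is correct and matches the paper's argument step for step: the same decomposition $(\sigma_c^{(M)})^2=\frac1M\sum_m (Q_c^{(m)}-\mu_c)^2-(\hat\mu_c-\mu_c)^2$, Bernstein with range $4Q_{\max}^2$ and Hoeffding each at level $\delta/2$, a union bound, and $|\hat\sigma-\sigma|\le|\hat\sigma^2-\sigma^2|/\sigma$. In your constant bookkeeping, state explicitly that the $4Q_{\max}^2L/(3M)$ term requires the sub-gamma form of Bernstein, $\Pr(S\ge\sqrt{2vt}+bt/3)\le e^{-t}$; naively inverting $\exp(-t^2/(2(v+bt/3)))$ gives only $2bL/(3M)$, and the paper relies on the same sharper form implicitly.
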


\begin{proof}
Write $\sigma^2=(\sigma_c^{\mathrm{true}})^2$, $\hat\mu=\hat\mu_c$, and
$\hat\sigma^2=(\sigma_c^{(M)})^2$.
Expanding the centered empirical variance gives
\[
  \hat\sigma^2
  = \frac{1}{M}\sum_{m=1}^M (Q_c^{(m)}-\mu_c)^2 - (\hat\mu-\mu_c)^2 .
\]
Let $Z_m=(Q_c^{(m)}-\mu_c)^2$. Then $Z_m\in[0,4Q_{\max}^2]$,
$\E[Z_m]=\sigma^2$, and $\mathrm{Var}(Z_m)=V$.
Bernstein's inequality yields, with probability at least $1-\delta/2$,
\[
  \left|\frac{1}{M}\sum_{m=1}^M Z_m - \sigma^2\right|
  \leq
  \sqrt{\frac{2VL}{M}} + \frac{4Q_{\max}^2L}{3M} .
\]
Hoeffding's inequality yields, with probability at least $1-\delta/2$,
\[
  |\hat\mu-\mu_c|
  \leq Q_{\max}\sqrt{\frac{2L}{M}},
\]
so $(\hat\mu-\mu_c)^2\leq \frac{2Q_{\max}^2L}{M}$.
A union bound over the two events gives the first display.
The second display follows from
$|\hat\sigma-\sigma|=|\hat\sigma^2-\sigma^2|/(\hat\sigma+\sigma)
\leq |\hat\sigma^2-\sigma^2|/\sigma$.
\end{proof}

\begin{remark}
Under the i.i.d. assumption, this controls only finite-ensemble estimation noise
around the variance of the idealized critic-output distribution. It does not
validate it as a Bayesian distribution over model uncertainty,
establish OOD detection by itself, or apply unchanged to correlated learned
critics. The empirical
diagnostics in Appendix~\ref{app:sigma_calibration} provide controlled evidence
that missing coverage raises spread while showing that its numerical values do
not generally track prediction error or tail-event probability.
\end{remark}

\setcounter{secnumdepth}{2}
\section{Implementation Details and Auxiliary Diagnostics}
\label{app:empirical}

\subsection{Implementation Details}
\label{app:implementation}

\paragraph{Baseline implementation details.}
The main table includes our PPO implementations of CPPO and QR PPO, three
maintained OmniSafe implementations, and our checked reproduction of Gaussian
WCSAC. CPPO computes the total cost
$C_{\mathrm{rollout}}$ of each rollout, estimates its empirical
$\alpha$-quantile $\eta=q_\alpha(C_{\mathrm{rollout}})$, and forms a Monte Carlo
(MC) cost advantage by
\begin{equation}
  A_c^{\mathrm{MC}}
  = A_c\,
    \mathrm{clip}\!\left(
      \frac{\mathbf{1}\{C_{\mathrm{rollout}}\geq\eta\}}{1-\alpha},
      0,w_{\max}
    \right).
\end{equation}
where $w_{\max}$ is the maximum multiplier applied after clipping. QR PPO is the
distributional-critic diagnostic rather than a maintained safe-RL baseline: it
trains an action-conditioned quantile cost critic with $N_q=32$ quantile heads
and uses the mean of quantiles with $\tau\geq\alpha$ as a detached upper-tail
signal added to the cost branch. The PPOLag, CPO, and FOCOPS rows use OmniSafe
0.5.0 under the same rare-event cost wrapper, cost limit,
total-step budget, and seed count. Because OmniSafe's maintained objectives are
written in episodic-cost units, we divide the episodic-cost statistic by the
observed mean episode length and rescale the cost-advantage branch consistently.
We still log the raw episodic cost. This conversion makes the optimizer's units
match the per-step rate used by every other row. The WCSAC row follows the
public Gaussian WCSAC mean/variance safety critics and CVaR-weighted SAC actor
objective using the fixed public-code revision \texttt{eaeff3e7}, with
the per-step budget converted to its
equivalent scale for a discounted cost-return critic. It is therefore labeled a
reproduction, not an author-supplied official run. Its near-zero Goal1 return
and extreme Ant failure should be read only as outcomes of this reproduction
under the synthetic wrapper. The retained runs cannot isolate whether Gaussian
cost modeling, optimization that reuses stored transitions, or transferring
published hyperparameters to our wrapper is
responsible, so those outcomes are not generalized to WCSAC outside this
protocol. Per-run configurations, protocol identifiers, raw episodes, and training
curves accompany the released table data.

\paragraph{Ensemble critics.}
All $M=5$ critics share the same multilayer-perceptron (MLP) architecture (two
hidden layers of 256 units with Tanh activations) but have independently drawn
initial parameters. They share the minibatch order and ordinary scalar TD target,
which contains no disagreement penalty. Each member instead receives an
independent binary mask that keeps each TD-loss entry with probability $0.8$;
this perturbs the data seen by each member while
preserving the target
$y_t=c_t+\gamma(1-d_t)V_c(s_{t+1})$.
No gradients or parameters are shared, and no auxiliary loss explicitly pushes
the predictions apart. Thus $\sigma_c$ reflects disagreement among separately
trained cost predictors rather than an imposed diversity objective.

\paragraph{Numerical stability.}
$\sigma_c$ is clamped to $\sigma_{\min}=10^{-6}$.
The ratio $d_\kappa=(\mu_c-K)/\sigma_c$ is clipped to $[-10,10]$ before
passing to $\Phi$ and $\phi$ to prevent very small computer-represented values
from rounding to zero.

\paragraph{PID controller.}
Anti-windup is implemented by freezing the integral term when the projected
mean-cost multiplier is saturated and the current error would drive it deeper into
saturation; when the error reverses, the integral term is allowed to unwind.
The default PID gains used in the experiments are $K_P=0.2$, $K_I=0.02$, and
$K_D=0.05$.

\paragraph{Cost-limit selection.}
All $\Clim$ values are specified in per-step cost units and applied
directly to the batch-mean cost $\mathrm{mean}(c_t\in B_k)$ in the PID
error signal, avoiding episode-length dependence. The final protocol uses
$\Clim=0.65$ for Goal1, Button1, and Push1, $0.015898692$ for Hopper, and
$3.4$ for Ant. These are environment-level cost-rate limits shared unchanged by
all methods, not method-specific budgets. The three navigation values were
inherited from the original production protocol; the Hopper value was set in an
early 100k-step preliminary PPO run to $0.85$ times that run's seed-0 evaluation
cost; and the
Ant value was fixed in the archived Ant experiment grid before the
shared-protocol OmniSafe and WCSAC comparisons. These records explain when the
budgets were chosen; they are not a budget-selection study planned in advance on
held-out runs. We
therefore condition all claims on the listed limits and do not claim robustness
to budget selection.

\paragraph{Hyperparameter selection record.}
The main BCPPO settings $(\alpha,\beta,\kappa,M)=(0.95,0.15,0.15,5)$ and PID
gains were fixed across environments before the final shared-protocol baseline
runs. The subsequent 500k sensitivity grid was analyzed only as a diagnostic and was not
used to replace main-table policies. Development included earlier engineering
preliminary engineering runs and no held-out tuning plan registered before the
experiments; accordingly, the paper treats
these values as shared defaults, not as uniquely optimal settings. In
particular, the sensitivity table contains alternative Push1 mean points with
both higher return and lower CVaR than the default at 500k steps.

\paragraph{Cost wrapper.}
For all main experiments, the wrapper computes the optimization cost using
Eq.~(\ref{eq:cost_wrapper}). Goal1 and Push1 use $c_a=0.005$ and $c_b=10$;
Button1, Hopper, and Ant use $c_a=0.01$ and $c_b=5$. The observation threshold
is fixed to $15$ for all tasks. Because observation coordinates and dimensions
differ across task families, this threshold defines a common synthetic stress
test rather than the same physical boundary event in every environment. The
same cost function, cost limit, and evaluation metrics are applied to every
method in a given environment. If an environment implementation exposes
a built-in safety cost, it is logged for reference but is not used by the
policy update or the reported main cost metrics. Appendix or auxiliary runs
under built-in Safety-Gymnasium costs are therefore reported separately and are
not mixed with the controlled rare-event main table.

\paragraph{Reproducibility details.}
Table~\ref{tab:repro_hardware} summarizes the hardware and software snapshot
used for the final main-table and ablation runs, and
Table~\ref{tab:repro_hyperparams} records the main training protocol and key
BCPPO hyperparameters. The inference benchmark protocol is described in
Appendix~\ref{app:inference}. CPU denotes central processing unit, RAM denotes
random-access memory, and GiB/TiB denote gibibytes/tebibytes. In the tables
below, ``internal'' denotes the
\bcppo{}, CPPO, and QR PPO implementations from our codebase.

\paragraph{Released-material scope.}
The experiments do not consume a fixed dataset or require an offline
preprocessing stage: every method collects trajectories online from the named
simulators. The accompanying release contains the training and analysis source,
wrapper definitions, launch grids, retained per-run configurations, final
episode records, and aggregation scripts. Table~\ref{tab:repro_hyperparams} and
the optimizer-setting paragraph below explicitly identify the limited set of
individual per-run configuration files that are no longer retained; we do not
reconstruct those missing files.

\begin{table*}[t]
\centering
\begingroup
\small
\setlength{\tabcolsep}{4pt}
\renewcommand{\arraystretch}{1.05}
\begin{tabularx}{\textwidth}{@{}>{\raggedright\arraybackslash}p{0.23\textwidth}>{\raggedright\arraybackslash}X@{}}
\toprule
\textbf{Execution group} & \textbf{Specification} \\
\midrule
Internal accelerator server & 16$\times$ PPU-ZW810E; 180 logical CPUs; 1.7~TiB RAM; Ubuntu 24.04.2; Python 3.12.3; PyTorch 2.6.0 \\
\cmidrule(lr){1-2}
Internal A800/evaluation & 4$\times$ NVIDIA A800-SXM4-80GB; 128 logical CPUs; 900~GiB RAM; Ubuntu 22.04.5; Python 3.10.20; PyTorch 2.7.1 \\
\cmidrule(lr){1-2}
OmniSafe baseline runs & 2$\times$ NVIDIA A800-SXM4-80GB; 128 logical CPUs; 400~GiB RAM; Ubuntu 22.04.5; Python 3.10.20; PyTorch 2.7.1; OmniSafe 0.5.0 \\
\cmidrule(lr){1-2}
WCSAC reproduction & 2$\times$ PPU-ZW810E; 40 logical CPUs; 200~GiB RAM; Ubuntu 24.04.2; Python 3.12.3; PyTorch 2.9.0 \\
\cmidrule(lr){1-2}
Shared environment stack & NumPy 1.26.0; Gymnasium 0.28.1; Safety-Gymnasium 1.0.0; MuJoCo 2.3.3 \\
\cmidrule(lr){1-2}
Thread limits & \texttt{OMP\_NUM\_THREADS}, \texttt{MKL\_NUM\_THREADS}, \texttt{OPENBLAS\_NUM\_THREADS}, \texttt{NUMEXPR\_NUM\_THREADS}, and Torch threads set to 1 for high-concurrency runs \\
\bottomrule
\end{tabularx}
\endgroup
\caption{
  Hardware and software environment.
  Main-table runs span the disclosed execution groups below, with one
  accelerator device assigned to each training run. Hardware affects elapsed
  training time and
  throughput (environment interactions per second); the environment, interaction budget, seeds, wrapper, and final
  evaluation protocol define the common comparison.
}
\label{tab:repro_hardware}
\end{table*}

\begin{table*}[t]
\centering
\begingroup
\small
\setlength{\tabcolsep}{4pt}
\renewcommand{\arraystretch}{1.05}
\begin{tabularx}{\textwidth}{@{}>{\raggedright\arraybackslash}p{0.23\textwidth}>{\raggedright\arraybackslash}X@{}}
\toprule
\textbf{Setting} & \textbf{Value} \\
\midrule
Main-table scale & 5 environments $\times$ 7 methods $\times$ 5 seeds = 175 runs \\
\cmidrule(lr){1-2}
Environments & SafetyPointGoal1/\-Button1/\-Push1-v0, Hopper-v4, Ant-v4 \\
\cmidrule(lr){1-2}
Training horizon & Nominally $10^6$ interactions; internal PPO 1{,}000{,}192--1{,}000{,}448 (rollout boundary), OmniSafe 999{,}424 (epoch boundary), WCSAC exactly 1{,}000{,}000 \\
\cmidrule(lr){1-2}
Final evaluation & 20 episodes per run; fixed seeds 10000--10019 \\
\cmidrule(lr){1-2}
Internal rollout steps / minibatch / epochs & Early 25 Button1/Push1 rows: 256 / 64 / 4; remaining 50 rows: 1024 / 512 / 3 \\
\cmidrule(lr){1-2}
OmniSafe steps per epoch / minibatch / update iterations & 2048 / 256 / 20 (CPO: 10 policy and 10 iterative-solver steps) \\
\cmidrule(lr){1-2}
WCSAC minibatch / stored-transition capacity / update ratio & 256 / $10^6$ / one update per environment step \\
\cmidrule(lr){1-2}
Internal PPO learning rate / hidden size / $\gamma$ / GAE $\lambda$ & $3\mathrm{e}{-4}$ / 256 / 0.99 / 0.95 \\
\cmidrule(lr){1-2}
WCSAC learning rate / hidden size / $\gamma$ & $10^{-3}$ / 2$\times$256 / 0.99 \\
\cmidrule(lr){1-2}
BCPPO ensemble size $M$ & 5 \\
\cmidrule(lr){1-2}
BCPPO $(\alpha,\beta,\kappa)$ & $(0.95,0.15,0.15)$ in every environment \\
\cmidrule(lr){1-2}
BCPPO PID gains $(K_P,K_I,K_D)$ & $(0.2,0.02,0.05)$ \\
\cmidrule(lr){1-2}
BCPPO $\lambda_{\max}$ & 50 in every environment \\
\cmidrule(lr){1-2}
BCPPO bootstrap keep probability & 0.8 \\
\bottomrule
\end{tabularx}
\endgroup
\caption{
  Main training protocol and BCPPO hyperparameters.
  Exact per-run configurations are retained for all 75 OmniSafe runs, all 25
  WCSAC runs, and 50 of 75 internal runs. The remaining 25 early internal runs
  retain their full training logs and settings recorded by the versioned launch
  script, but their individual \texttt{train\_config.json} files are no longer
  available. We report the launch-script settings explicitly rather than
  reconstruct per-run files.
}
\label{tab:repro_hyperparams}
\end{table*}

\paragraph{PPO optimizer-setting record.}
The 25 early Button1/Push1 internal rows used the versioned launch-script defaults
(rollout 256, minibatch 64, four update epochs), whereas the 50 runs with retained
per-run configurations used the later setting (1024, 512, three). BCPPO uses one setting
within each environment, but the Button1/Push1 CPPO and QR PPO seed sets contain
a mixture of the two settings. All rows retain the same interaction budget,
wrapper, training seeds, and fixed final-evaluation protocol. We release the
exact 50 retained configuration files and document the other 25 through the
versioned launch-script settings. This difference in optimizer settings is a
limitation of our PPO comparison; no result value is missing or
reconstructed.

\paragraph{Relation to contemporaneous tail methods.}
EVO fits extreme samples with extreme-value objectives
\citep{gao2025evo}; SL-SAC combines implicit-quantile cost critics with a
CVaR-constrained Langevin SAC update~\citep{keswani2026slsac}; and SteinGate
uses a Stein-discrepancy certificate to switch into
recovery~\citep{chemingui2026steingate}. These methods target outcome-tail
modeling or distributional certification. \bcppo{} instead uses
scalar-critic disagreement as a training-only reliability signal and does not
claim the guarantees or outcome-distribution coverage of those approaches.

\subsection{Finite-Set Pareto Definition}
\label{app:pareto}

Figure~\ref{fig:pareto_frontier} treats the seven trained methods as a finite
candidate set. Within each environment, a method is non-dominated if no other
candidate has weakly lower empirical CVaR@95 and weakly higher return, with one
strict inequality. Solid segments order the observed non-dominated means by
CVaR; they are visual guides, not interpolated policies. The logarithmic CVaR
axes in Figure~\ref{fig:pareto_frontier} are monotone transformations and thus
do not change this finite-set membership. Frontier membership is descriptive:
overlapping intervals are not interpreted as statistically significant
superiority.

\subsection{Computational Complexity Details}
\label{app:complexity}

\begin{table*}[t]
\centering
\begingroup
\small
\setlength{\tabcolsep}{5pt}
\renewcommand{\arraystretch}{1.05}
\begin{tabularx}{\textwidth}{@{}>{\raggedright\arraybackslash}p{0.14\textwidth}>{\raggedright\arraybackslash}p{0.28\textwidth}>{\raggedright\arraybackslash}X@{}}
\toprule
\textbf{Method} & \textbf{Forward cost} & \textbf{Additional operations} \\
\midrule
Vanilla PPO & $1\times$ actor + $1\times$ critic & GAE only \\
\bcppo{} & $+M$ cost-critic passes & $\mathcal O(BM)$ mean/variance + $\mathcal O(B)$ Bachelier call \\
CPPO (Monte Carlo CVaR) & $1\times$ actor + $1\times$ critic & sort $N_{\rm traj}$ trajectory costs (indicator weighting per rollout) \\
QR PPO  & $+1$ quantile critic with $N_q$ heads & $\mathcal O(BN_q)$ quantile regression \\
CPO     & $1\times$ actor + $1\times$ critic & repeated products in an iterative second-order constraint solver \\
\bottomrule
\end{tabularx}
\endgroup
\caption{
  Additional per-update complexity relative to vanilla PPO.
  $B$ is batch size, $M$ is ensemble size, $N_q$ is the number of quantile
  heads, and $N_{\rm traj}$ is the number of trajectories.
}
\label{tab:complexity}
\end{table*}

Table~\ref{tab:complexity} summarizes the dominant additional costs relative
to vanilla PPO. QR PPO scales linearly with $N_q$; CPPO sorts rollout costs;
CPO uses an iterative second-order constraint solver; and \bcppo{} evaluates
$M$ cost critics. These are different computational profiles: \bcppo{} avoids
trajectory sorting and second-order products but pays for the ensemble during training, so
the operation-count comparison alone does not establish an elapsed-time
advantage.

\newpage
\subsection{Multiplier Activity and Ensemble Diversity Audit}
\label{app:lambda_ensemble_audit}

Across the 25 final-main runs, $\lambda$ is positive for 14.2\% of logged actor
updates when each run receives equal weight, with substantial task variation
(Goal1: 18.4\%; Button1: 7.6\%; Push1: 17.0\%; Hopper: 27.9\%; Ant:
0.1\%). Thus disagreement cannot explain Ant's final operating point. This
directly confirms that the implemented mechanism is conditional on the
mean-cost controller rather than an always-active tail constraint.

The 21-checkpoint/probe audit combines 15 retained final-main checkpoints for
Goal1, Hopper, and Ant with six explicitly labeled 100k diagnostic checkpoints
for Button1 and Push1, whose legacy final checkpoints were unavailable. The
saved critics learn strongly correlated common cost functions, with mean
pairwise correlation 0.917. After removing that common prediction at every
probe, however, member deviations have normalized effective rank 0.578, so the
ensembles are not identical copies. Full run-level values and confidence
intervals accompany the released analysis. This functional-diversity check
does not establish statistical independence, calibrated epistemic
uncertainty, or safety.

\subsection{Inference Latency Benchmark}
\label{app:inference}

Table~\ref{tab:inference_latency} reports per-forward inference latency on a
CPU for the final policies from the main 1M-step runs. Timings use batch size 1,
100 untimed warmup iterations, and 1000 timed iterations. Policy-only inference is the
deployment path; policy-plus-ensemble timing is a conservative diagnostic upper
bound that includes the $M=5$ cost-critic ensemble forward pass used to compute
$\mu_c$ and $\sigma_c$. The CPPO and QR PPO saved model states (checkpoints)
created by our codebase
store an unused ensemble module for architectural compatibility, but those
algorithms never evaluate it. Their policy-plus-ensemble entries therefore time
a hypothetical extra computation, not their actual inference paths.

\begin{table*}[t]
\centering
\begingroup
\small
\setlength{\tabcolsep}{4pt}
\renewcommand{\arraystretch}{1.06}
\begin{tabular}{@{}lcc@{}}
\toprule
\textbf{Method} & \textbf{Policy (ms)} & \textbf{With ensemble (ms)} \\
\midrule
BCPPO & $0.66 \pm 0.66$ & 3.58 \\
CPPO & $0.89 \pm 0.54$ & 4.81 \\
QR PPO & $0.89 \pm 0.47$ & 5.02 \\
 
\bottomrule
\end{tabular}
\endgroup
\caption{
  Per-forward inference latency on CPU at batch size 1.
  ``ms'' denotes milliseconds; policy entries report mean $\pm$ standard
  deviation across successful saved-model timings. We omit averaged reciprocal
  throughput because it is not the reciprocal of mean latency.
}
\label{tab:inference_latency}
\end{table*}

All three policy-only timings have large overlapping variation, so the measured
differences do not establish that one actor is faster. The relevant architectural
fact is that BCPPO discards the training-time ensemble at deployment and executes
one PPO actor forward pass.

\subsection{Transition-Level Critic-Disagreement Diagnostic}
\label{app:sigma_calibration}

We reevaluate the 15 available 1M-step BCPPO saved model states from Ant, Hopper,
and Goal1 with the same 20 deterministic evaluation seeds (10000--10019) used by
Table~\ref{tab:main_core}. The early Button1 and Push1 runs no longer have saved
model states, so they are excluded rather than reconstructed. The resulting 300
episodes contain
223,607 unique transitions. At every transition we retain the ensemble mean and
spread after multiplying both discounted critic outputs by $(1-\gamma)$ to put
them on the same per-step scale as the reported cost rate. We also retain the
exact wrapper cost and boundary indicator, which is one when
$\|o(s)\|_2>15$ and zero otherwise. We compute a Monte Carlo
cost-to-go target $\sum_{j=t}^{T-1}\gamma^{j-t}c_j$ from each completed episode
and scale it by the same factor. We additionally evaluate a
same-state comparison action obtained by adding Gaussian noise with standard
deviation $0.25$ times the action range and clipping to the valid action bounds;
this action is diagnostic only and is never executed.

For each trained policy, we compute: (i) Spearman correlation between
$\sigma_c$ and the absolute Monte Carlo cost-to-go prediction error; (ii) area
under the receiver-operating-characteristic curve (AUC), where $0.5$ denotes
chance ranking and $1$ perfect ranking,
for classifying its top-$10\%$ prediction errors; (iii) boundary-hit AUC
when both event classes occur; and (iv) AUC for distinguishing policy actions
from controlled perturbed actions using $\sigma_c$. We first compute each
statistic separately for every trained policy, then report their mean and a 95\%
confidence interval obtained by repeatedly resampling policies. This avoids
mixing transitions from environments and policies with different scales.
Tables~\ref{tab:sigma_correlation}
and~\ref{tab:sigma_bins} report the overall and environment-level results.

\begin{table*}[t]
\centering
\begingroup
\small
\setlength{\tabcolsep}{6pt}
\renewcommand{\arraystretch}{1.06}
\begin{tabularx}{0.72\textwidth}{@{}Xr@{}}
\toprule
\textbf{Statistic} & \textbf{Value} \\
\midrule
Transitions / episodes / policies & 223,607 / 300 / 15 \\
Boundary-hit AUC (eligible policies) & $0.868\;[0.734,0.968]$ (7/15) \\
Spearman($\sigma_c$, step cost) & $0.112\;[-0.001,0.236]$ \\
Spearman($\sigma_c$, absolute Monte Carlo error) & $-0.030\;[-0.168,0.096]$ \\
Top-$10\%$ prediction-error AUC & $0.520\;[0.455,0.589]$ \\
Perturbed-action AUC & $0.568\;[0.528,0.607]$ \\
 
\bottomrule
\end{tabularx}
\endgroup
\caption{
  Transition-level critic-disagreement diagnostic under the common final-evaluation
  protocol.
  Intervals are 95\% confidence intervals obtained by resampling trained
  policies; boundary AUC includes only policies containing both hit and non-hit
  transitions.
}
\label{tab:sigma_correlation}
\end{table*}

\begin{table*}[t]
\centering
\begingroup
\small
\setlength{\tabcolsep}{7pt}
\renewcommand{\arraystretch}{1.06}
\begin{tabularx}{0.82\textwidth}{@{}Xrrrr@{}}
\toprule
\textbf{Environment} & \textbf{Hits} & \textbf{Hit AUC} & \textbf{Error $\rho$} & \textbf{Action AUC} \\
\midrule
Ant & 1920 & $0.911\pm0.092$ (5/5) & $-0.222\pm0.290$ & $0.584\pm0.066$ \\
Hopper & 0 & -- (0/5) & $0.034\pm0.265$ & $0.553\pm0.020$ \\
Goal1 & 1169 & $0.761\pm0.338$ (2/5) & $0.098\pm0.165$ & $0.567\pm0.133$ \\
 
\bottomrule
\end{tabularx}
\endgroup
\caption{
  Environment-level critic-disagreement diagnostic.
  Entries are mean$\pm$standard deviation over five policies. ``Hits'' counts
  boundary-hit transitions; parentheses show the number of policies eligible
  for boundary AUC. ``Error $\rho$'' is the Spearman
  correlation with absolute cost-to-go prediction error, and ``Action AUC'' uses
  disagreement to distinguish policy actions from perturbed actions.
}
\label{tab:sigma_bins}
\end{table*}

Tables~\ref{tab:sigma_correlation} and~\ref{tab:sigma_bins} show a limited
positive result for boundary events: all five Ant policies
and two Goal1 policies contain both event classes, while Hopper has no boundary
hits. The mean boundary AUC over these seven eligible policies is $0.868$, but
the mean prediction-error correlation is $-0.030$ and the top-$10\%$ error AUC
is $0.520$, both statistically compatible with chance. Controlled perturbed
actions induce a modest but consistent spread increase (AUC $0.568$,
95\% CI $[0.528,0.607]$). Thus the ensemble signal can flag specific
boundary-associated and action-perturbed decisions, but it is not a generally
numerically reliable estimator of prediction error, aleatoric tail probability, or policy
safety. This mixed result supports the paper's limited
\emph{critic-disagreement shaping} interpretation and rules out a stronger
claim that disagreement values track true error or risk.
Because the diagnostic observes final-policy trajectories plus small local
action perturbations, it does not show how disagreement behaves in arbitrary
unvisited regions or identify the causal effect of the training-time shaping
term.

\paragraph{Controlled missing-coverage diagnostic.}
We next deliberately remove a defined region from the data used to train the
critics and test whether disagreement identifies that region on separate
episodes. We keep the same 15 policies fixed and train fresh $M=5$ cost-critic
ensembles after dividing episodes without overlap into 70\% for fitting and
30\% for evaluation. The structured condition excludes the upper $20\%$ by
either current-observation norm or the dot product of standardized
state--action features with one fixed random direction. Its matched control
uses the same architecture, initialization, number of transitions, bootstrap
keep probability, and optimizer updates, but draws transitions uniformly from all fitting
episodes. Evaluation is performed only on the held-out episodes. We test both
the same TD target used during training (Eq.~\ref{eq:td_target}) and a direct
Monte Carlo cost-to-go target used solely for comparison.

\begin{table*}[t]
\centering
\begingroup
\small
\setlength{\tabcolsep}{6pt}
\renewcommand{\arraystretch}{1.06}
\begin{tabularx}{0.80\textwidth}{@{}XXrr@{}}
\toprule
\textbf{Shift} & \textbf{Target} & $\boldsymbol{\Delta}$ \textbf{OOD AUC} & $\boldsymbol{\Delta\rho_{\rm err}}$ \\
\midrule
Observation norm & Temporal difference & $0.168\ [0.118,0.218]$ & $0.003\ [-0.066,0.064]$ \\
Random projection & Temporal difference & $0.099\ [0.044,0.153]$ & $0.051\ [0.003,0.102]$ \\
\midrule
Observation norm & Monte Carlo & $0.153\ [0.106,0.201]$ & $0.128\ [0.040,0.243]$ \\
Random projection & Monte Carlo & $0.087\ [0.022,0.148]$ & $0.030\ [-0.054,0.114]$ \\
 
\bottomrule
\end{tabularx}
\endgroup
\caption{
  Paired controlled missing-coverage diagnostic over 15 fixed policies.
  $\Delta$ is structured removal minus matched-random critic retraining; brackets are
  95\% confidence intervals obtained by resampling policies. OOD AUC uses
  $\sigma_c$ to classify the
  predefined omitted region on disjoint episodes; $\rho_{\rm err}$ is Spearman
  correlation with absolute Monte Carlo cost-to-go error.
}
\label{tab:coverage_shift}
\end{table*}

Table~\ref{tab:coverage_shift} shows that, under the same TD target used in
training, observation-norm OOD AUC rises from
$0.585$ in the matched control to $0.753$ after structured removal, and
random-projection OOD AUC rises from $0.571$ to $0.670$. The paired gains are
$0.168$ (95\% CI $[0.118,0.218]$) and $0.099$ ($[0.044,0.153]$), respectively,
showing that omitted training data---rather than sample count alone---raises ensemble
spread under this critic-retraining protocol. The corresponding error-correlation gain
is null for observation norm and small for random projection. Direct Monte Carlo fitting
improves error ranking for observation norm, indicating that shared TD-target
bias can decouple disagreement from realized error. Boundary-AUC changes remain
inconclusive because only six policies contain both event classes. This result
supports $\sigma_c$ as a coverage-sensitive disagreement signal, not as a
tail-event probability, general error estimator, or policy-safety guarantee.

\paragraph{Matched target-noise negative control.}
To distinguish omitted training examples from noisy targets, we use the matched-control
training subset, which still contains examples above the omission threshold,
and add zero-mean TD-target noise at
$\{0,0.25,0.5,1.0\}$ times the clean-target standard deviation. Samples,
initialization, minibatch order, and bootstrap masks are identical across noise
levels; the zero-noise replay reproduces the original matched control exactly.
At the primary noise multiplier of $1.0$, chosen before analysis, structured
removal still exceeds the
noisy control in OOD AUC by $0.126$ (95\% CI $[0.069,0.185]$) for observation
norm and $0.110$ ($[0.060,0.159]$) for random projection, with positive paired
effects for 13 of 15 policies under each score. Ant and Hopper have
within-environment confidence intervals entirely above zero, whereas Goal1
remains inconclusive. Error-ranking effects remain null. Thus disagreement
responds more strongly to omitted training data than to this controlled
target-noise model, but the result is not a complete epistemic--aleatoric
decomposition.

\subsection{Paired Actor-Update Intervention}
\label{app:actor_intervention}

We test whether the detached disagreement term changes the PPO actor update in
its intended direction. The diagnostic uses nine BCPPO checkpoints saved after
100k training steps from Hopper, Button1, and Push1 (three seeds each), with
eight independent 1,024-step rollouts generated by each checkpoint's current
policy. For each
rollout, 75\% of transitions update the actor and the remaining 25\% supply
held-out states for measurement. Treatment and control actors start from
identical parameters and use the same rollout, split, newly initialized Adam
gradient optimizer, minibatch order, PPO epochs, and saved cost-critic ensemble
whose parameters remain fixed;
only the detached disagreement term is enabled or disabled.

On each held-out state, we draw 16 candidate actions from the unchanged
pre-update policy. The frozen ensemble assigns each candidate a predicted mean
cost $\widehat Q_c=\mu_c$ and penalty $\RB$. We define
$\Delta\log\pi=\log\pi_{\rm treatment}-\log\pi_{\rm control}$, so a negative
association between $\RB$ and $\Delta\log\pi$, after controlling for
$\widehat Q_c$, means that the penalty lowers the relative probability of
high-disagreement actions. We also examine the rollout actions used by the
update and actions sampled from each actor after the update; for the latter,
$\Delta\sigma_c$ is treatment minus control mean disagreement. The primary
analysis uses the five checkpoints with a positive saved multiplier
($\lambda>0$). The success rule, chosen before analysis, requires confidence
intervals below zero for both the rank-correlation test and a linear-regression
slope after all variables are standardized to zero mean and unit variance. As
negative controls, four $\lambda=0$ treatment--control pairs and 72 pairs with
the disagreement term disabled in both actors should remain identical.

\begin{table}[t]
\centering
\begingroup
\scriptsize
\setlength{\tabcolsep}{3pt}
\renewcommand{\arraystretch}{1.04}
\begin{tabularx}{\columnwidth}{@{}Xrl@{}}
\toprule
\textbf{Diagnostic} & \textbf{Estimate [95\% CI]} & \textbf{Direction} \\
\midrule
Held-out candidates: rank correlation after accounting for $\widehat Q_c$
& $-0.119\;[-0.216,-0.039]$ & 5/5 negative \\
Held-out candidates: standardized slope after accounting for $\widehat Q_c$
& $-0.110\;[-0.520,0.168]$ & 2/5 negative \\
Rollout update actions: Spearman $(\RB,\Delta\log\pi)$
& $-0.040\;[-0.052,-0.031]$ & 5/5 negative \\
Post-update sampled actions: $\Delta\sigma_c$
& $-1.63\;[-4.24,-0.07]\times10^{-6}$ & 4/5 negative \\
$\lambda=0$ policy-parameter difference
& $0$ (exact) & 4/4 zero \\
Both-disable control parameter difference
& $0$ (exact) & 72/72 zero \\
\bottomrule
\end{tabularx}
\endgroup
\caption{
  Paired actor-update intervention.
  Negative treatment-minus-control values indicate steering away from actions
  assigned a larger $\RB$ by the checkpoint's own frozen ensemble.
  Confidence intervals resample checkpoints after averaging the eight rollouts
  for each checkpoint.
}
\label{tab:actor_intervention}
\end{table}

Table~\ref{tab:actor_intervention} shows that the rank-correlation test and two
additional steering diagnostics have intervals below zero. The standardized
linear-slope interval crosses zero, so the rule chosen before analysis requiring both
candidate-action tests is not met. Exact zeros for every $\lambda=0$ pair and
every both-disable pair confirm the expected implementation behavior. Because
this is one update performed after training with a newly initialized shared
optimizer, it
establishes neither an exact continuation of the original training trajectory
nor a causal reduction in final-policy CVaR.

\FloatBarrier
\section{Training Curves}
\label{app:training_curves}

Figures~\ref{fig:training_performance} and~\ref{fig:training_risk} group the
periodic records retained by the 75 internal PPO-family runs. Curves are
five-seed means; bands are standard errors, computed as the standard deviation
across seeds divided by $\sqrt{5}$. These saved model states describe
training dynamics and do not replace the fixed-20 final evaluation in
Table~\ref{tab:main_core}. Figure~\ref{fig:training_risk}(b) shows BCPPO
disagreement after a 25-record per-run rolling mean. Released source tables
contain every plotted row and the exact aggregation output.

\begin{figure*}[!p]
\centering
\includegraphics[width=0.90\textwidth]{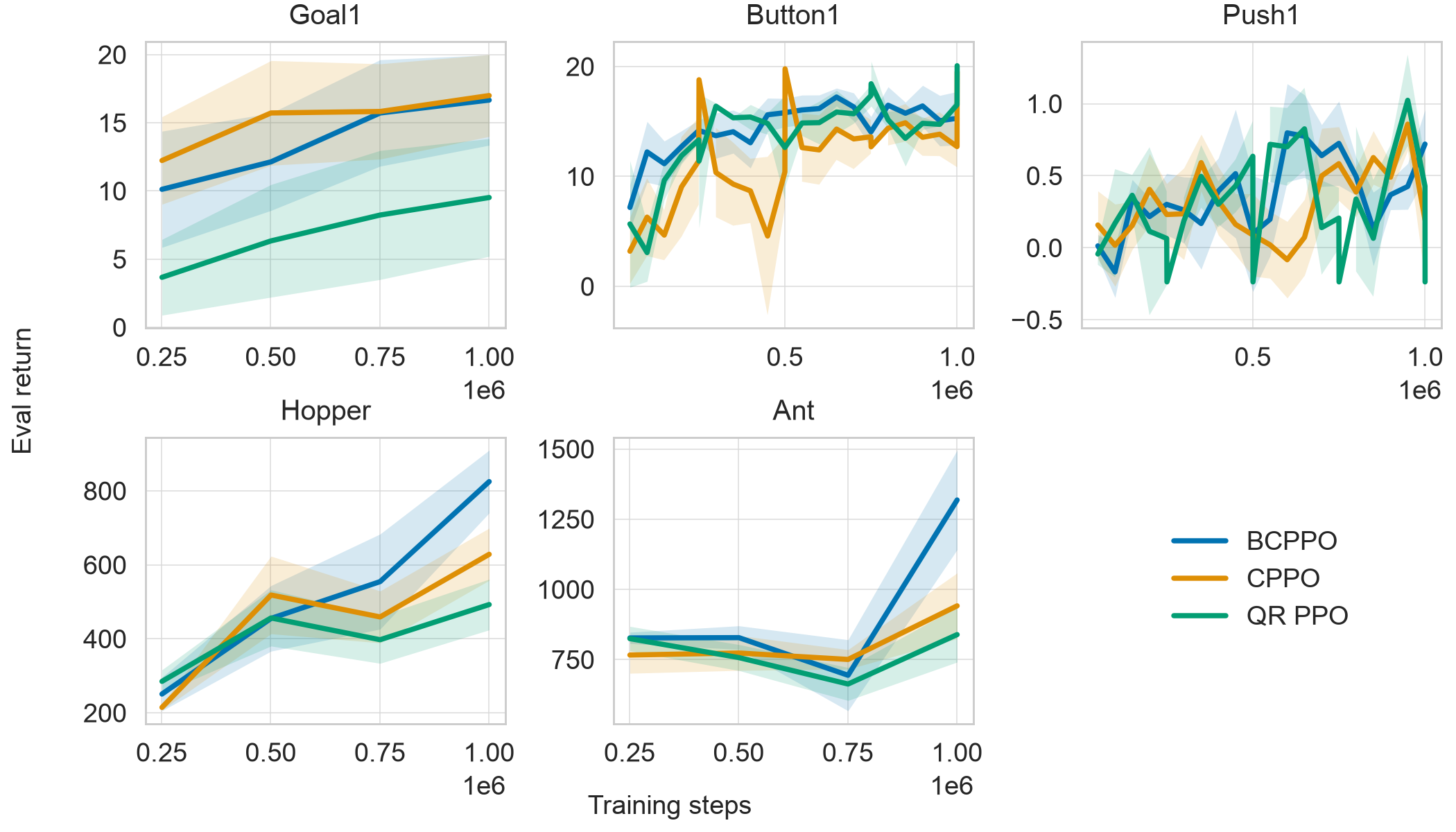}\\[-2pt]
\includegraphics[width=0.90\textwidth]{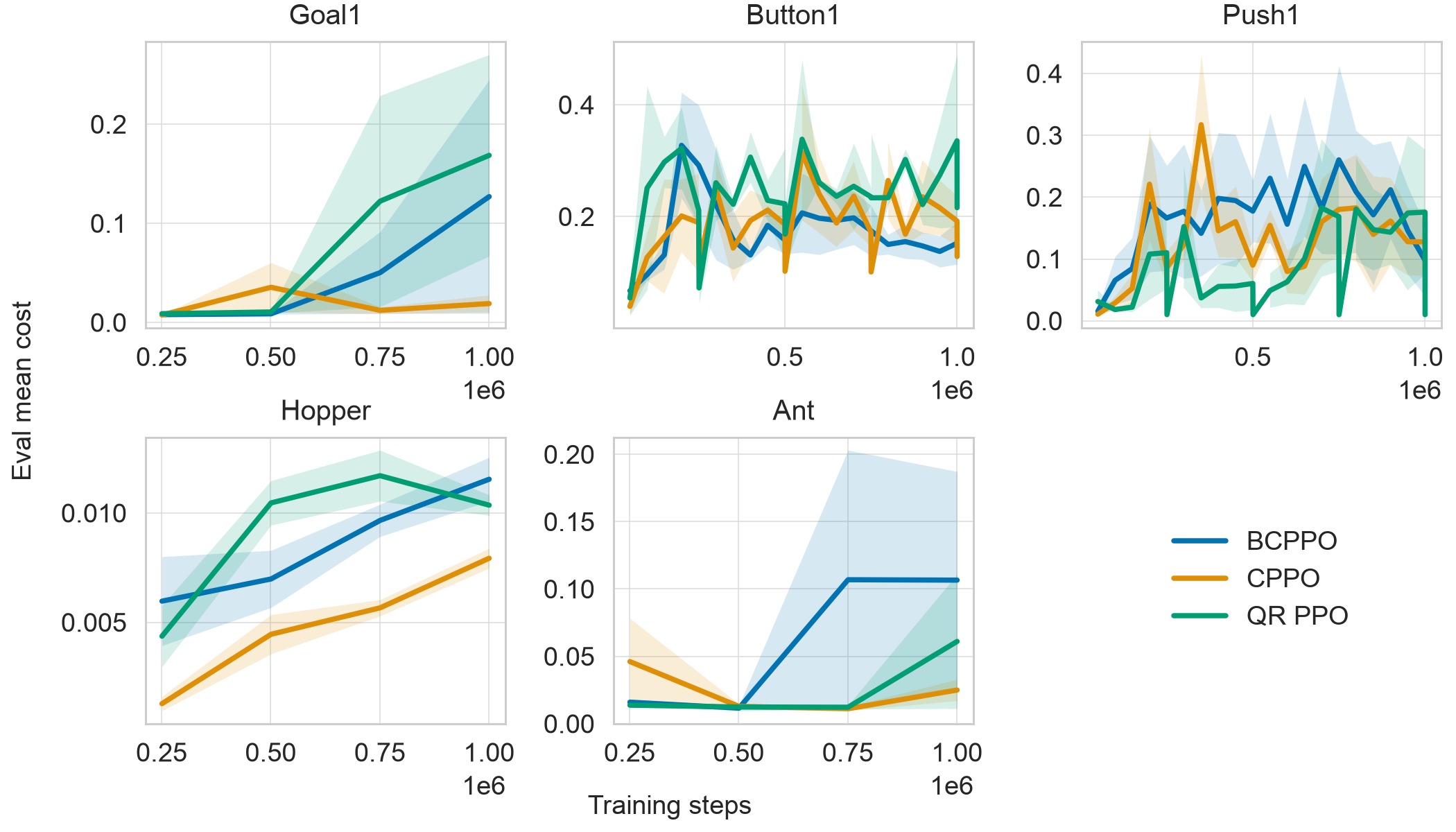}
\caption{Periodic main-experiment performance diagnostics: (a) return and
(b) mean cost. The cost is the episode-average rate used by the main table.}
\label{fig:training_performance}
\end{figure*}

\begin{figure*}[!p]
\centering
\includegraphics[width=0.90\textwidth]{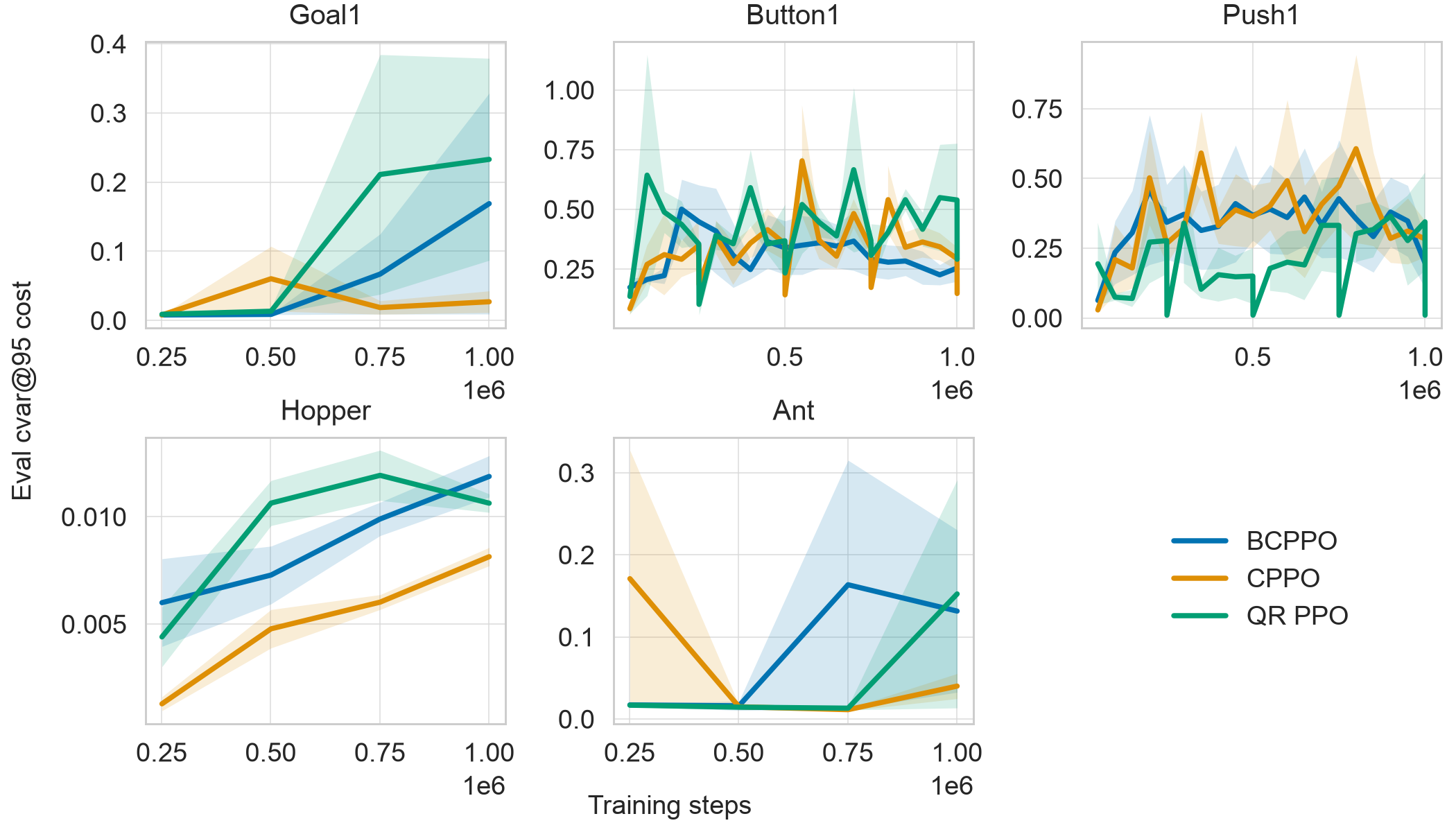}\\[-2pt]
\includegraphics[width=0.90\textwidth]{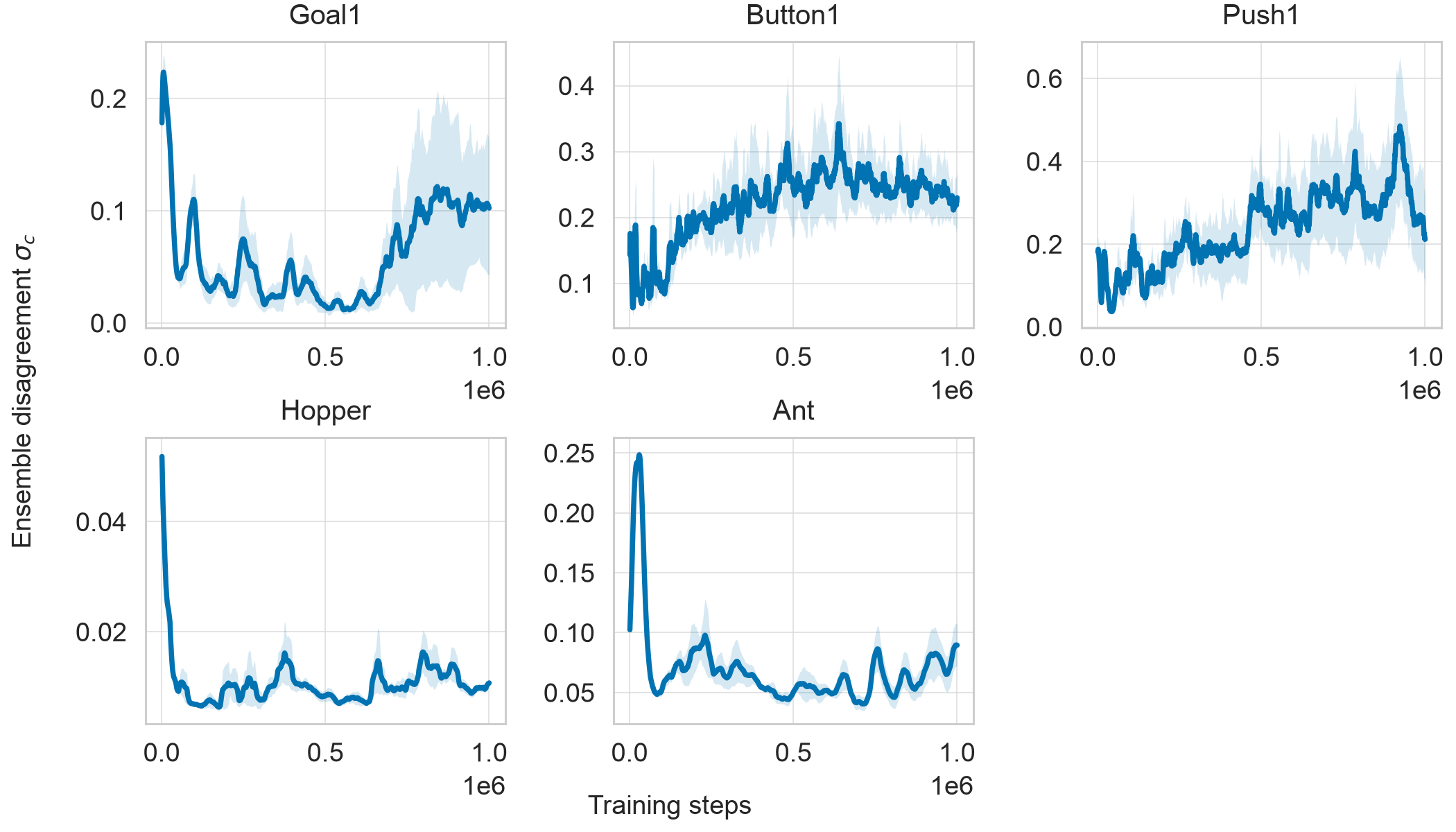}
\caption{Periodic risk diagnostics: (a) empirical CVaR@95 from each
checkpoint's recorded evaluation episodes and (b) BCPPO cost-critic
disagreement. The disagreement scale is environment dependent.}
\label{fig:training_risk}
\end{figure*}

\section{Ablation Studies}
\label{app:ablations}

\subsection{Component and Sensitivity Results}

This appendix provides the full component, uncertainty-placement,
hyperparameter, and ensemble-size diagnostics referenced in
Section~\ref{sec:experiments}. All runs use the same rare-event wrapper and
20-episode final evaluation as the main table, but train for 500k steps over
five seeds on Button1, Push1, and Hopper. Tables~\ref{tab:ablation_components},
\ref{tab:ablation_sensitivity}, and~\ref{tab:bachelier_sigma} report the
component, sensitivity, and scale-control results, respectively.

\begin{table*}[t]
\centering
\begingroup
\small
\setlength{\tabcolsep}{5pt}
\renewcommand{\arraystretch}{1.05}
\begin{tabular}{@{}llccc@{}}
\toprule
\textbf{Variant} & \textbf{Metric}
& \textbf{Button1} & \textbf{Push1} & \textbf{Hopper} \\
\midrule
\multirow{3}{*}{\textbf{\bcppo{}}} & Return & \pmv{16.348}{5.158} & \pmv{0.149}{0.342} & \pmv{851.32}{1160.49} \\
& Cost & \pmv{0.285}{0.256} & \pmv{0.290}{0.262} & \pmv{0.007}{0.005} \\
& CVaR@95 & \pmv{0.414}{0.325} & \pmv{0.515}{0.454} & \pmv{0.007}{0.005} \\
\addlinespace[1.5pt]
\multirow{3}{*}{no Bachelier} & Return & \pmv{15.051}{4.786} & \pmv{0.799}{0.183} & \pmv{534.36}{183.03} \\
& Cost & \pmv{0.216}{0.152} & \pmv{0.305}{0.330} & \pmv{0.009}{0.006} \\
& CVaR@95 & \pmv{0.356}{0.204} & \pmv{0.550}{0.522} & \pmv{0.009}{0.006} \\
\addlinespace[1.5pt]
\multirow{3}{*}{uncertainty bonus} & Return & \pmv{14.266}{1.255} & \pmv{0.069}{0.550} & \pmv{354.22}{219.20} \\
& Cost & \pmv{0.358}{0.175} & \pmv{0.394}{0.339} & \pmv{0.007}{0.004} \\
& CVaR@95 & \pmv{0.541}{0.240} & \pmv{0.843}{0.544} & \pmv{0.007}{0.004} \\
\addlinespace[1.5pt]
\multirow{3}{*}{no branch normalization} & Return & \pmv{17.651}{2.294} & \pmv{0.073}{0.270} & \pmv{520.32}{102.26} \\
& Cost & \pmv{0.122}{0.009} & \pmv{0.171}{0.276} & \pmv{0.011}{0.002} \\
& CVaR@95 & \pmv{0.234}{0.054} & \pmv{0.934}{1.479} & \pmv{0.011}{0.002} \\
\addlinespace[1.5pt]
\multirow{3}{*}{no anti-windup} & Return & \pmv{14.400}{6.874} & \pmv{0.398}{0.438} & \pmv{340.38}{307.46} \\
& Cost & \pmv{0.452}{0.170} & \pmv{0.329}{0.246} & \pmv{0.008}{0.007} \\
& CVaR@95 & \pmv{0.668}{0.165} & \pmv{0.554}{0.378} & \pmv{0.008}{0.008} \\
 
\bottomrule
\end{tabular}
\endgroup
\caption{
  Component ablations at 500k steps over five seeds.
  ``no Bachelier'' sets $\beta=0$; ``uncertainty bonus'' moves disagreement to
  the reward branch; ``no branch normalization'' disables independent normalization;
  ``no anti-windup'' keeps the projected PID update but always accumulates its
  integral state.
}
\label{tab:ablation_components}
\end{table*}

The ablations suggest, but do not statistically establish, three localized
design effects. Removing anti-windup increases Button1 mean CVaR@95 from 0.41
to 0.67; moving uncertainty to the reward-bonus branch increases Push1 mean
CVaR@95 from 0.52 to 0.84; and removing branch normalization yields high Push1
seed variance (CVaR@95 standard deviation $1.48$). None is uniform across all
three tasks.

The no-penalty comparison is especially important for the scope of the central
claim. On Button1, the full method gains 1.30 mean return but increases CVaR by
0.058; on Push1, it reduces CVaR by 0.035 but also reduces return by 0.650; on
Hopper, it improves both means, although variation across seeds is large. The first two are
reward--risk trade-offs rather than evidence of unconditional tail reduction.
This is consistent with the proposed critic-training-sensitivity interpretation: the
spread penalty deliberately adds a conservative bias, which can be useful when
disagreement marks consequential model uncertainty and unnecessarily restrictive
when disagreement is benign. The ablation does not establish which regime holds
at an individual state--action pair.

The $\beta$, $\alpha$, and $\kappa$ rows below all change the same effective
coefficient $\beta_{\mathrm{eff}}=\beta h(\Phi^{-1}(\alpha)+\kappa)$ and should
be read as coefficient sensitivity, not three independent safety mechanisms.
Among the tested settings, $\beta=0.10$ and $\kappa=0.30$ have the lowest
reported mean CVaR. On Push1, $M=1$ has no nontrivial spread signal and the
largest reported CVaR variance, while $M=10$ has nearly the same mean CVaR as
$M=5$ without a uniform reward--risk gain. This does not prove that $M=5$ is
uniquely optimal, and the shared default is one common setting rather than the
empirically best setting for every task.

\begin{table*}[t]
\centering
\begingroup
\small
\setlength{\tabcolsep}{6pt}
\renewcommand{\arraystretch}{1.05}
\begin{tabular}{@{}lccc@{}}
\toprule
\textbf{Setting} & \textbf{Return} & \textbf{Cost} & \textbf{CVaR@95} \\
\midrule
Full BCPPO & \pmv{0.149}{0.342} & \pmv{0.290}{0.262} & \pmv{0.515}{0.454} \\
\midrule
$\beta=0.05$ & \pmv{0.194}{0.350} & \pmv{0.197}{0.234} & \pmv{0.411}{0.409} \\
$\beta=0.10$ & \pmv{0.261}{0.701} & \pmv{0.075}{0.068} & \pmv{0.190}{0.154} \\
$\beta=0.25$ & \pmv{0.298}{0.575} & \pmv{0.265}{0.185} & \pmv{0.735}{0.540} \\
$\beta=0.40$ & \pmv{0.458}{0.540} & \pmv{0.173}{0.130} & \pmv{0.350}{0.276} \\
\midrule
$\kappa=0.00$ & \pmv{0.106}{0.930} & \pmv{0.169}{0.133} & \pmv{0.680}{0.654} \\
$\kappa=0.10$ & \pmv{0.650}{1.116} & \pmv{0.108}{0.103} & \pmv{0.268}{0.176} \\
$\kappa=0.30$ & \pmv{\textminus{}0.115}{0.512} & \pmv{0.082}{0.107} & \pmv{0.194}{0.185} \\
\midrule
$\alpha=0.90$ & \pmv{0.589}{0.361} & \pmv{0.153}{0.188} & \pmv{0.354}{0.426} \\
$\alpha=0.99$ & \pmv{0.386}{0.600} & \pmv{0.197}{0.210} & \pmv{0.479}{0.465} \\
\midrule
$M=1$ & \pmv{0.599}{0.263} & \pmv{0.345}{0.427} & \pmv{1.275}{1.602} \\
$M=3$ & \pmv{0.438}{0.356} & \pmv{0.182}{0.142} & \pmv{0.666}{0.857} \\
$M=10$ & \pmv{0.506}{0.372} & \pmv{0.292}{0.348} & \pmv{0.518}{0.525} \\
 
\bottomrule
\end{tabular}
\endgroup
\caption{
  Effective-coefficient and ensemble-size sensitivity on Push1 at 500k steps
  over five seeds. Defaults are $\beta=0.15$, $\kappa=0.15$, $\alpha=0.95$,
  and $M=5$; the first three sweeps all change
  $\beta_{\rm eff}=\beta h(\Phi^{-1}(\alpha)+\kappa)$.
}
\label{tab:ablation_sensitivity}
\end{table*}

\begin{table*}[t]
\centering
\begingroup
\small
\setlength{\tabcolsep}{6pt}
\renewcommand{\arraystretch}{1.05}
\begin{tabular}{@{}llccc@{}}
\toprule
\textbf{Variant} & \textbf{Metric}
& \textbf{Button1} & \textbf{Push1} & \textbf{Hopper} \\
\midrule
\multirow{3}{*}{\textbf{\bcppo{}}} & Return & \pmv{16.348}{5.158} & \pmv{0.149}{0.342} & \pmv{851.32}{1160.49} \\
& Cost & \pmv{0.285}{0.256} & \pmv{0.290}{0.262} & \pmv{0.007}{0.005} \\
& CVaR@95 & \pmv{0.414}{0.325} & \pmv{0.515}{0.454} & \pmv{0.007}{0.005} \\
\addlinespace[1.5pt]
\multirow{3}{*}{raw $\sigma_c$} & Return & \pmv{14.124}{7.956} & \pmv{0.474}{0.750} & \pmv{610.40}{256.35} \\
& Cost & \pmv{0.176}{0.175} & \pmv{0.245}{0.137} & \pmv{0.010}{0.005} \\
& CVaR@95 & \pmv{0.284}{0.249} & \pmv{0.427}{0.187} & \pmv{0.011}{0.005} \\
 
\bottomrule
\end{tabular}
\endgroup
\caption{
  Unmatched-scale Bachelier versus raw-$\sigma_c$ diagnostic at 500k steps over
  five seeds. The raw-$\sigma_c$ variant keeps the same ensemble, detached cost
  branch, normalization, anti-windup PID controller, and $\beta$, but replaces
  $\RB$ by $\sigma_c$. Since $\RB=h(c_0)\sigma_c$, this intentionally changes
  the effective coefficient by $1/h(c_0)$.
}
\label{tab:bachelier_sigma}
\end{table*}

This diagnostic does not compare two structural risk signals. At the defaults,
$h(c_0)\approx0.01446$, so the same-$\beta$ raw variant is approximately
$69.2\times$ stronger. Its lower mean SafetyPoint costs and lower Button1/Push1
CVaR therefore reflect a different operating scale. With
$\beta_{\rm raw}=\beta h(c_0)$, the two actor penalties are algebraically
identical; no empirical superiority claim is made from this table.

\FloatBarrier
\begin{figure*}[!p]
\centering
\includegraphics[width=0.88\textwidth]{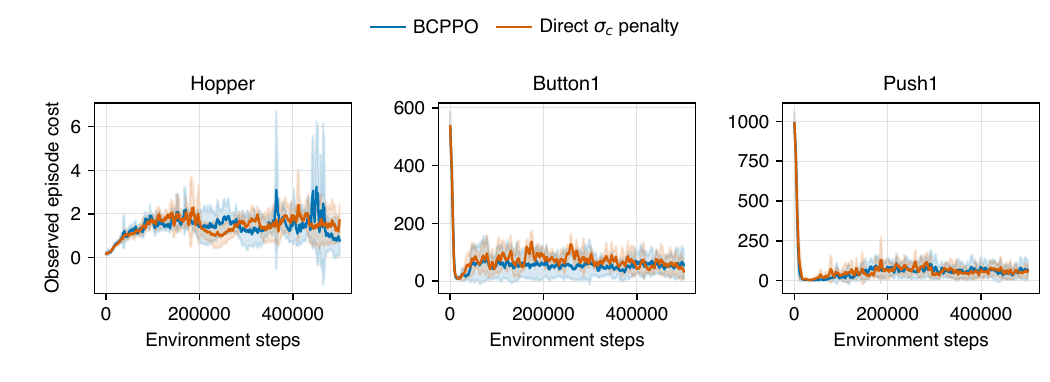}\\[-1pt]
\includegraphics[width=0.88\textwidth]{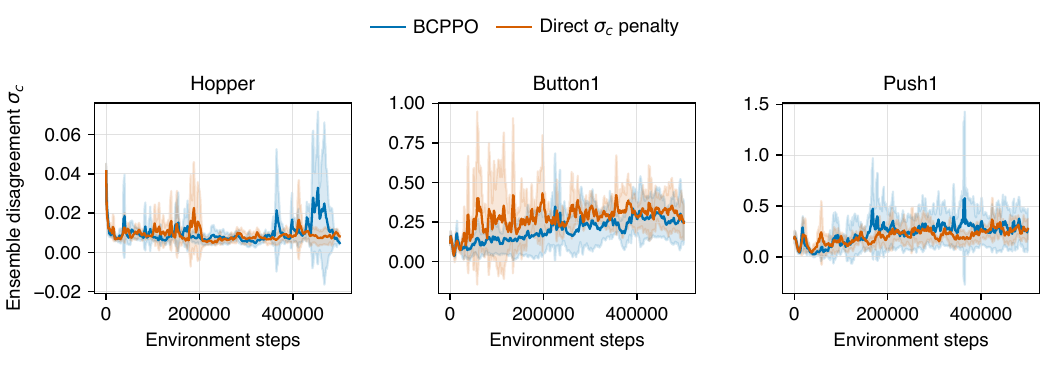}\\[-1pt]
\includegraphics[width=0.68\textwidth]{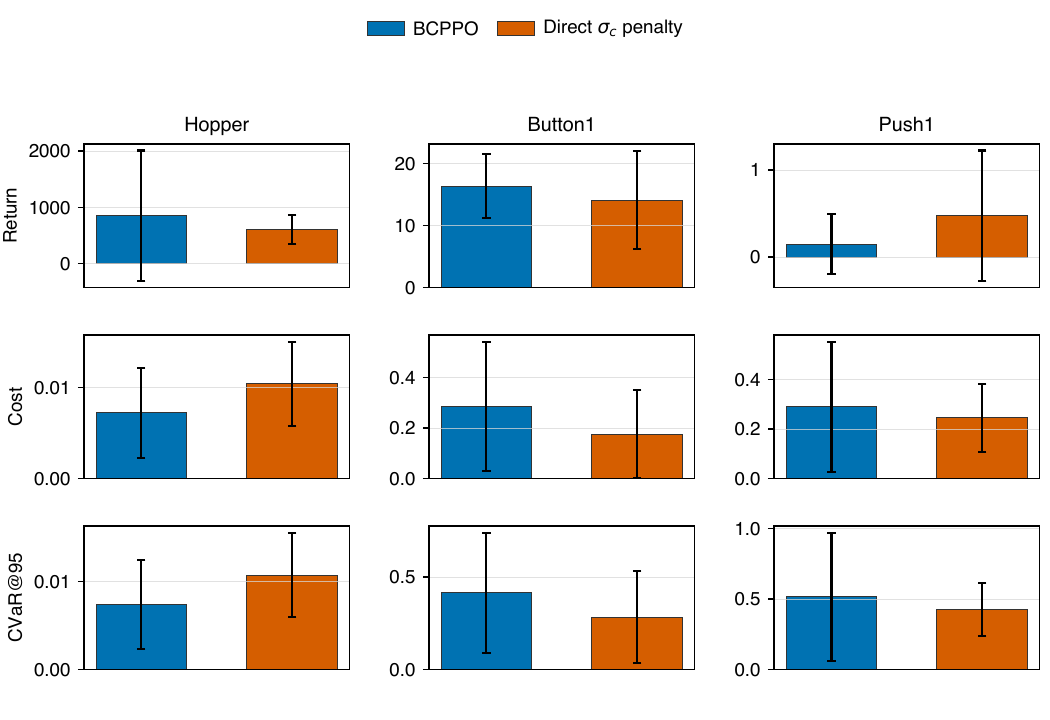}
\caption{Unmatched-scale Bachelier versus direct-$\sigma_c$ diagnostic:
(a) observed cost, (b) ensemble disagreement, and (c) 500k-step final
evaluation. Training curves are five-seed mean$\pm$standard deviation with a
15-record rolling mean; final bars are mean$\pm$standard deviation over five
seeds.}
\label{fig:ablation_bachelier}
\end{figure*}

\begin{figure*}[!p]
\centering
\includegraphics[width=0.72\textwidth]{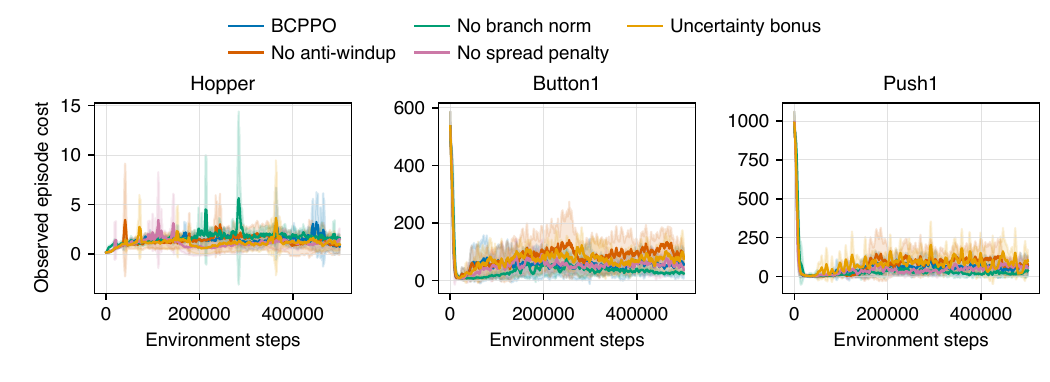}\\[-1pt]
\includegraphics[width=0.72\textwidth]{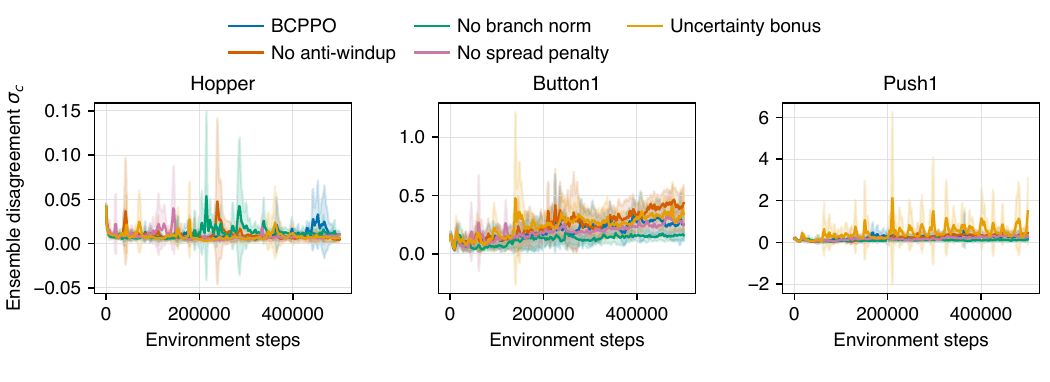}\\[-1pt]
\includegraphics[width=0.72\textwidth]{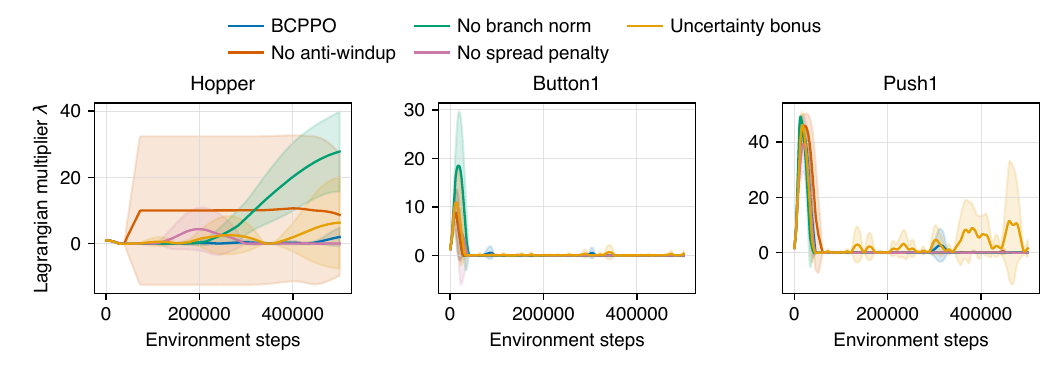}\\[-1pt]
\includegraphics[width=0.54\textwidth]{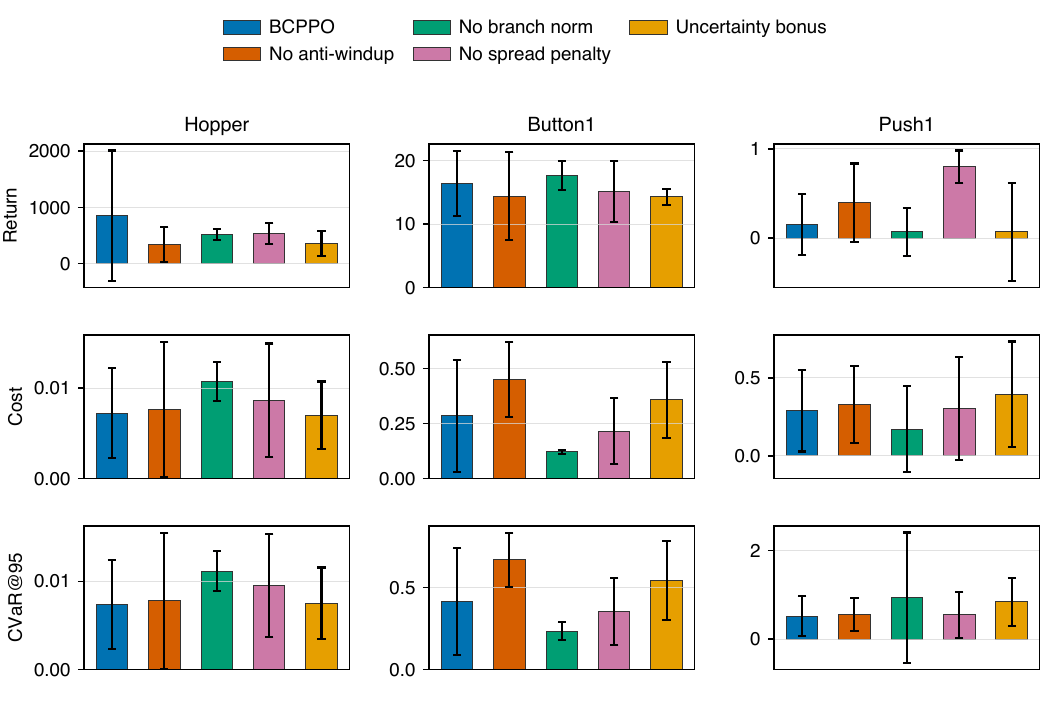}
\caption{Mechanism ablations: (a) observed cost, (b) ensemble disagreement,
(c) multiplier $\lambda$, and (d) 500k-step final evaluation. Training curves
are five-seed mean$\pm$standard deviation with a 15-record rolling mean; final
bars are mean$\pm$standard deviation over five seeds.}
\label{fig:ablation_mechanism}
\end{figure*}

\begin{figure*}[!p]
\centering
\includegraphics[width=0.78\textwidth]{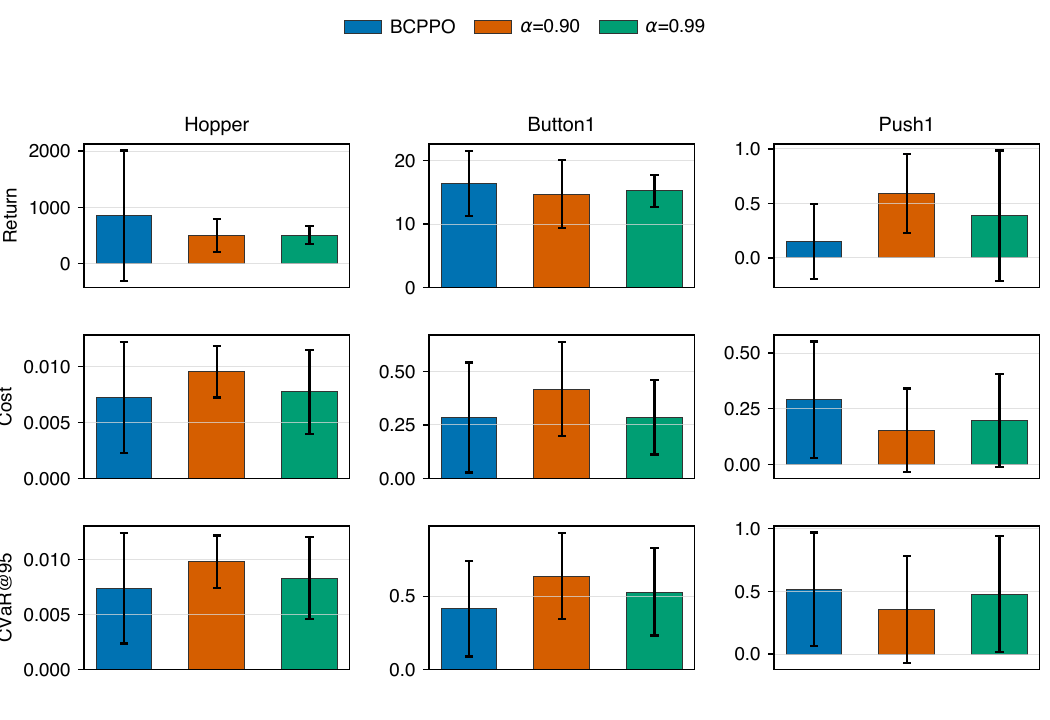}\\[-1pt]
\includegraphics[width=0.78\textwidth]{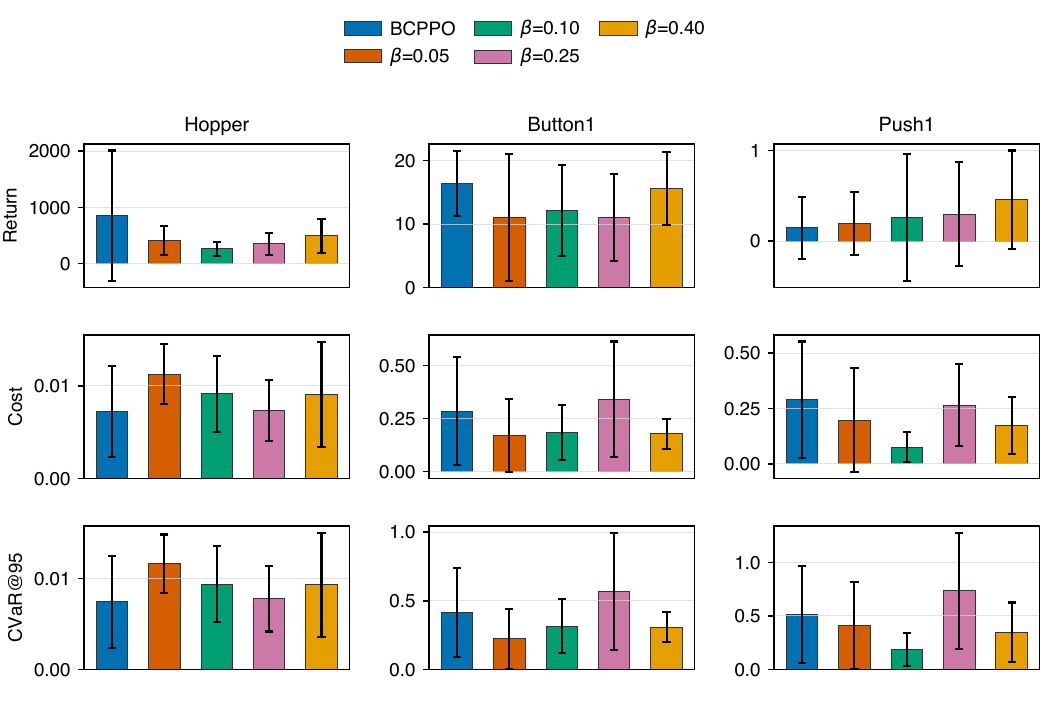}
\caption{Effective-coefficient sensitivity at 500k steps:
(a) $\alpha$ and (b) $\beta$. Bars are five-seed
mean$\pm$standard deviation.}
\label{fig:ablation_sensitivity_ab}
\end{figure*}

\begin{figure*}[!p]
\centering
\includegraphics[width=0.78\textwidth]{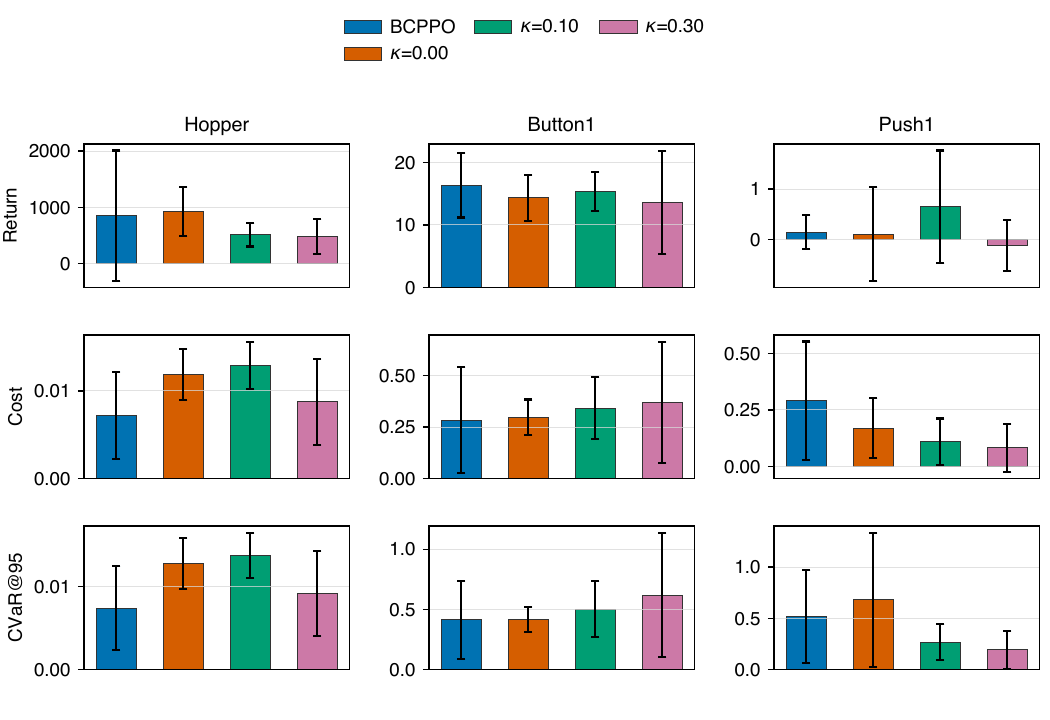}\\[-1pt]
\includegraphics[width=0.78\textwidth]{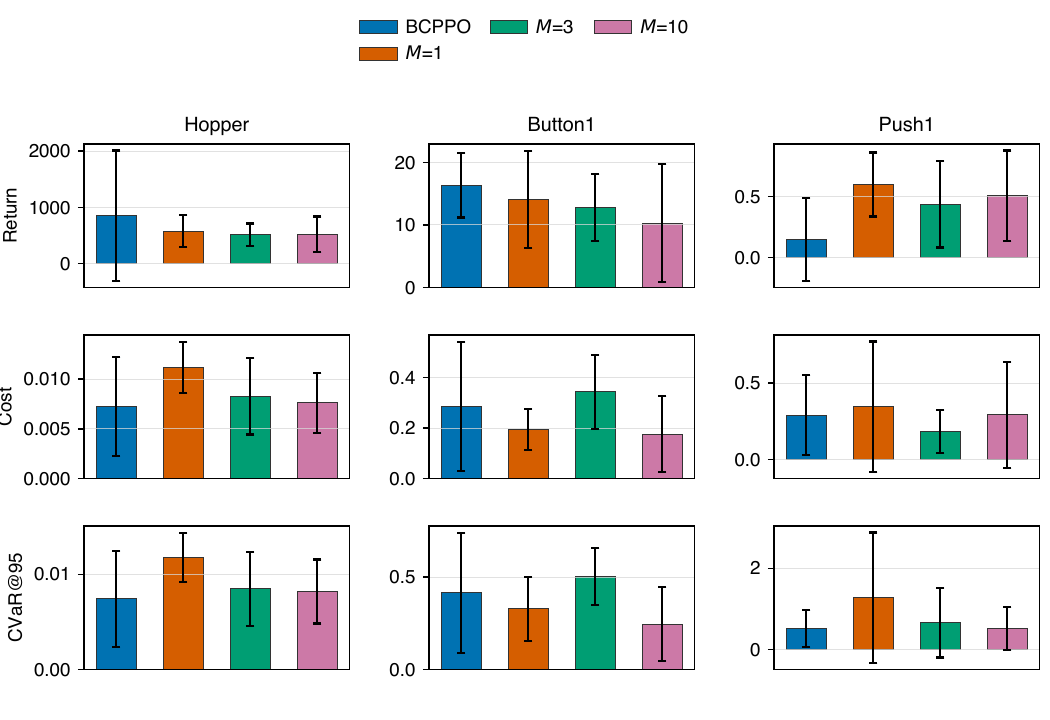}
\caption{Sensitivity at 500k steps: (a) $\kappa$ and (b) ensemble size $M$.
Bars are five-seed mean$\pm$standard deviation.}
\label{fig:ablation_sensitivity_km}
\end{figure*}
\FloatBarrier
\fi

\end{document}